\documentclass[letterpaper]{article} 
\usepackage{aaai2027}  
\usepackage[hyphens]{url}  
\usepackage{graphicx} 
\usepackage{natbib}  
\usepackage{caption} 
\usepackage{algorithm}
\usepackage{algorithmic}

\usepackage{newfloat}
\usepackage{listings}
\DeclareCaptionStyle{ruled}{labelfont=normalfont,labelsep=colon,strut=off} 
\floatstyle{ruled}
\newfloat{listing}{tb}{lst}{}
\floatname{listing}{Listing}

\usepackage{booktabs}

\usepackage{algorithm}
\usepackage{algorithmic}
\usepackage{multirow}
\usepackage[table]{xcolor}
\usepackage{url}
\usepackage{graphicx}
\usepackage{subcaption}
\usepackage{amsmath}
\usepackage{algorithm}
\usepackage{algorithmic}
\usepackage{amssymb}
\usepackage{mathtools}
\usepackage{amsthm}
\usepackage{booktabs}
\usepackage[table]{xcolor}
\usepackage[most]{tcolorbox}
\usepackage{xcolor}

\theoremstyle{plain}
\theoremstyle{plain}
\newtheorem{theorem}{Theorem}

\theoremstyle{definition}

\theoremstyle{remark}

\definecolor{outputtitlegray}{RGB}{73,71,71}
\definecolor{questionblue}{RGB}{20,20,220}
\definecolor{correctgreen}{RGB}{92,166,55}

\newtcolorbox{modeloutputbox}[1]{
    enhanced,
    width=\linewidth,
    colback=white,
    colframe=black,
    colbacktitle=outputtitlegray,
    coltitle=white,
    fonttitle=\sffamily\large,
    title={#1},
    boxrule=0.8pt,
    titlerule=0.8pt,
    arc=3mm,
    outer arc=3mm,
    left=2mm,
    right=2mm,
    top=1.5mm,
    bottom=1.5mm,
    toptitle=1.5mm,
    bottomtitle=1.5mm,
    before skip=2mm,
    after skip=1mm
}

\definecolor{prompttitlegray}{RGB}{73,71,71}

\newtcolorbox{promptbox}[1]{
    enhanced,
    width=\linewidth,
    colback=white,
    colframe=black,
    colbacktitle=prompttitlegray,
    coltitle=white,
    fonttitle=\sffamily\bfseries,
    title={#1},
    boxrule=0.8pt,
    titlerule=0.8pt,
    arc=2.5mm,
    outer arc=2.5mm,
    left=2.5mm,
    right=2.5mm,
    top=2mm,
    bottom=2mm,
    toptitle=1.5mm,
    bottomtitle=1.5mm,
    before skip=1.5mm,
    after skip=1.5mm
}

\theoremstyle{plain}

\title{Toward Plasticity-Preserving KL Regularization for Capability Retention in LLM Reinforcement Learning}
\author{%
\begin{tabular}{c}
Li Wang$^{1,}$\thanks{Equal contribution.} \quad
Xiaodong Lu$^{1,}$\footnotemark[1] \quad
Xiaohan Wang$^{1,}$\thanks{%
\raggedright
Corresponding authors:
\mbox{\texttt{tianhao.peng@ntu.edu.sg}};\protect\newline
\mbox{\texttt{\{wangxiaohan17,yinguojun02\}@meituan.com}}.%
} \quad
Jiajun Chai$^{1}$ \\
Wei Lin$^{1}$ \quad
Tianhao Peng$^{2,}$\footnotemark[2] \quad
Guojun Yin$^{1,}$\footnotemark[2]
\end{tabular}%
}

\affiliations{
\textsuperscript{\rm 1}Meituan\\
\textsuperscript{\rm 2}Nanyang Technological University
}

\begin{document}

\maketitle

\begin{abstract}
Reinforcement learning (RL) has become a central paradigm for large language model (LLM) post-training, but optimization toward new objectives can degrade capabilities already present in the base model. KL regularization is widely used to mitigate such forgetting by constraining policy drift toward a reference model. However, standard full-policy KL regularization constrains the entire response distribution and may unnecessarily restrict exploration and target-task learning. This raises a natural question: can a more precise constraint preserve existing capabilities while minimizing interference with learning new tasks? To this end, we propose \underline{Co}rrectness-Conditioned \underline{KL} Regularization (CoKL), a conditional regularization framework that narrows the preservation constraint from the full output distribution to correctness-conditioned response distributions. We instantiate CoKL with forward KL divergence and derive a practical finite-group training objective for RL-based LLM post-training. At the population level, CoKL decouples the total probability
assigned to correct responses from their correctness-conditioned
distribution, thereby regularizing the relative probability
allocation among reference-supported correct responses without
directly anchoring incorrect outputs or total correctness mass. We further show that full-policy forward and reverse KL regularization induce a strict optimal correctness gap when the reference policy is imperfect, whereas CoKL avoids this limitation. Experiments in controlled multi-solution environments and continual post-training settings across multiple model scales demonstrate that CoKL achieves a more favorable balance between target-task improvement and prior-capability retention than existing regularization methods. Our code is available at \url{https://github.com/Lumina04/CoKL}.
\end{abstract}


\section{Introduction}
Reinforcement learning (RL) has become a central paradigm for large language model (LLM) post-training. Reinforcement Learning from Human Feedback (RLHF)~\cite{bai2022training,ouyang2022training} is widely used to align model behavior with human preferences, while Reinforcement Learning with Verifiable Rewards (RLVR)~\cite{guo2025deepseek} enables scalable optimization of reasoning performance through automatically verifiable feedback~\cite{lin2026resrl,lu2026contextual}. Despite these advances, RL-based post-training can degrade capabilities already encoded in the base model. By repeatedly amplifying reward-favored outputs, RL may concentrate probability mass on a narrow set of high-reward behaviors while suppressing alternative response patterns and reasoning strategies that remain valid. This form of capability forgetting can manifest as degraded performance on previously learned or out-of-distribution tasks~\cite{lin2024mitigating,kotha2024understanding,cai2026advancing,li2025omni}, indicating that adaptation to a new objective
may interfere with capabilities acquired before post-training. This issue is particularly consequential for general-purpose LLMs, for which post-training should improve adaptation to new objectives without compromising previously acquired capabilities. Achieving effective adaptation while preserving existing capabilities therefore remains a fundamental challenge in RL-based LLM post-training.

\begin{figure}[t]
    \centering
    \includegraphics[width=0.45\textwidth]{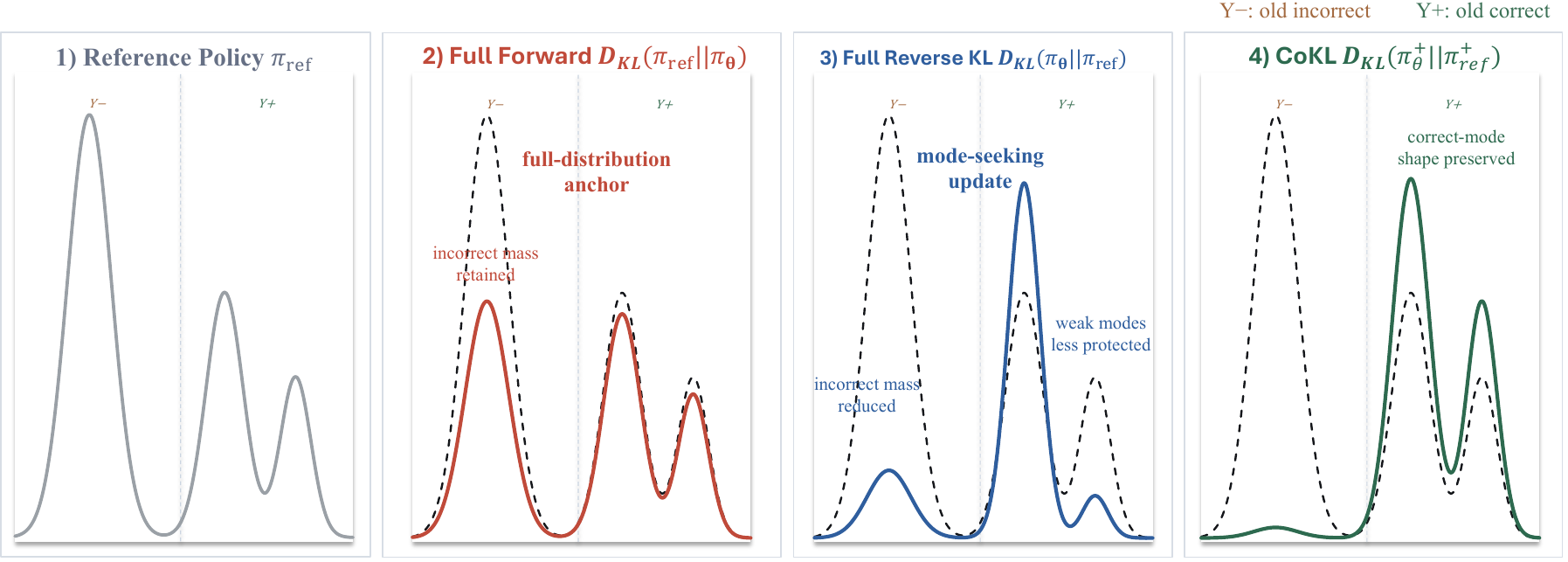}
    \caption{Schematic comparison of KL constraint scopes. Full-policy KL constrains correctness mass and the conditional distributions of both correct and incorrect responses, whereas CoKL constrains only the correctness-conditioned distribution without anchoring incorrect responses or total correctness mass.}    \label{fig:kl_regularizer_comparison}
\end{figure}

KL regularization is widely used to preserve pretrained capabilities during LLM post-training by constraining policy drift from the base model. As shown in Figure~\ref{fig:kl_regularizer_comparison}, reverse KL tends to favor high-probability regions of the reference policy, whereas forward KL more strongly preserves its support. However, existing KL regularizers are typically applied to the full response distribution\cite{shao2024deepseekmath,liu2025understanding}. Such global constraints may preserve useful behaviors, but can also retain ineffective or erroneous modes, thereby restricting exploration and adaptation to new tasks. To avoid this limitation, some approaches remove the KL penalty entirely to encourage exploration~\cite{yu2026dapo,xue2025simpletir}. Yet unconstrained policy updates can cause substantial drift from the base model and increase the risk of capability forgetting. This raises a natural question: can we impose constraints only where preservation is necessary, thereby retaining existing capabilities while minimizing interference with adaptation during post-training?

To preserve existing capabilities without unnecessarily restricting learning during post-training, we propose \underline{Co}rrectness-Conditioned \underline{KL} Regularization (CoKL), a more precise conditional regularization framework that constrains the response distribution according to correctness feedback rather than the full output distribution. In this work, we instantiate CoKL with forward KL divergence and apply it to the conditional distribution over verified-correct responses. CoKL preserves the relative probability allocation among correct response modes inherited from the reference policy while allowing the total probability assigned to correct responses to increase. Incorrect response modes are therefore not anchored to the reference policy and can be freely reshaped by the post-training objective. By narrowing the scope of regularization rather than uniformly reducing its strength, CoKL retains previously accessible correct behaviors while minimizing interference with the acquisition of new capabilities. We derive a practical finite-group objective for RL-based LLM post-training and provide a population-level analysis showing that CoKL decouples the total correctness probability from the conditional distribution over correct responses. In contrast, full-policy forward and reverse KL regularization couple these quantities and induce a strict optimal correctness gap when the reference policy is imperfect. Controlled simulations and continual post-training experiments across multiple model scales further demonstrate that CoKL achieves a favorable balance between capability retention and learning during post-training. Our contributions are summarized as follows:

\begin{itemize}
\item We introduce a more precise preservation principle for LLM post-training. Rather than constraining the full response distribution, it restricts regularization to the conditional distribution over verified-correct responses, avoiding direct constraints on incorrect outputs and total correctness mass.

\item We instantiate this principle as CoKL, a practical conditional regularization method for RL-based LLM post-training, and derive its finite-group training objective. Our population-level analysis
shows that CoKL decouples total correctness probability from
the correctness-conditioned response distribution, whereas
full-policy KL induces a strict optimal correctness gap under
an imperfect reference policy.

\item We evaluate the distributional behavior of CoKL in a
controlled multi-solution environment and assess its practical
retention--adaptation trade-off in continual LLM post-training
across multiple model scales.
\end{itemize}

\section{Related work}
\subsection{LLM RL and Capability Preservation}
Reinforcement learning has become a central component of LLM post-training, enabling models to optimize task-specific objectives and improve complex reasoning capabilities. Building on policy optimization methods such as GRPO~\cite{shao2024deepseekmath}, recent algorithms including DAPO~\cite{yu2026dapo} and GSPO~\cite{zheng2025group} improve training stability and optimization efficiency, yielding substantial gains in mathematical reasoning~\cite{lin2026resrl,lu2026contextual} and tool use~\cite{lin2025rest,wang2026implicit}. However, optimization toward new objectives can shift the model away from its original policy distribution, degrading performance on previously learned tasks~\cite{li2025can} or suppressing effective reasoning patterns already encoded in the base model~\cite{chen2026does}. Prior work has mitigated such interference through regularization, parameter merging~\cite{wang2026mix}, and multi-domain training~\cite{cai2026advancing}. In contrast, we revisit KL regularization in RL-based post-training and introduce a simple yet more precise constraint that preserves existing capabilities while reducing interference with post-training learning.

\subsection{KL Regularization in LLM RL}
KL regularization is a standard component in RLHF and RLVR for stabilizing policy optimization and preventing excessive deviation from a reference policy~\cite{ziegler2019fine,stiennon2020learning,ouyang2022training}. Existing methods typically apply KL regularization to the full policy distribution while adjusting its direction, strength, or token-level application to balance training stability and exploration~\cite{yu2026dapo,xue2025simpletir,deng2025unlocking,li2025choice,lin2026expo,wang2025stabilizing,cui2025entropy,vassoyan2025ignore,lee2026sage,gx2025kl}. However, full-policy constraints do not distinguish the total probability assigned to correct responses from the relative probability allocation within the correct-response set. Recent verifier-aware methods instead operate on correct responses to improve distributional coverage or diversity. Kruszewski et al.~\cite{kruszewski2025whatever} construct an explicit target distribution by filtering incorrect responses while preserving the relative reference probabilities among correct responses, UCPO~\cite{lochab2026uniform} encourages a uniform allocation over correct solutions, and DSDR~\cite{wan2026dsdr} introduces trajectory- and token-level regularization to promote diversity among correct reasoning paths. In contrast, CoKL conditions both the reference and current policies on correctness and regularizes $D_{\mathrm{KL}}(\pi_{\mathrm{ref}}^{+}\Vert\pi_{\theta}^{+})$. Unlike correct-only target matching, this preserves the reference-induced allocation within the correct-response set without directly optimizing total correctness mass or constraining incorrect responses.


\section{Preliminaries}
\subsection{Group Relative Policy Optimization}
Group Relative Policy Optimization (GRPO)~\cite{shao2024deepseekmath}
eliminates the need for a learned value function by estimating advantages
from multiple responses generated for the same prompt. Given a prompt
$x_i$, the behavior policy $\pi_{\theta_{\mathrm{old}}}$ samples a group
$\mathcal{O}_i=\{o_{i,g}\}_{g=1}^{G}$, where each response receives a
verifier reward $r_{i,g}$. Its group-normalized advantage is
\begin{equation}
A_{i,g}
=
\frac{r_{i,g}-\mu_i}{\sigma_i+\epsilon},
\qquad
\mu_i=\frac{1}{G}\sum_{g=1}^{G}r_{i,g},
\label{eq:grpo_adv}
\end{equation}
where $\sigma_i$ is the standard deviation of rewards within
$\mathcal{O}_i$, and $\epsilon$ is a small numerical constant. Since the
reward is assigned at the response level, $A_{i,g}$ is shared by all
tokens in $o_{i,g}$. Let
\begin{equation}
\rho_{i,g,t}(\theta)
=
\frac{
\pi_\theta(o_{i,g,t}\mid x_i,o_{i,g,<t})
}{
\pi_{\theta_{\mathrm{old}}}
(o_{i,g,t}\mid x_i,o_{i,g,<t})
},
\label{eq:grpo_ratio}
\end{equation}
and define the clipped surrogate
\begin{equation}
\ell_{i,g,t}(\theta)
=
\min\left\{
\rho_{i,g,t}A_{i,g},
\operatorname{clip}
\bigl(\rho_{i,g,t},1-\delta,1+\delta\bigr)A_{i,g}
\right\}.
\label{eq:grpo_surrogate}
\end{equation}
The GRPO objective is then written as
\begin{equation}
\mathcal{J}_{\mathrm{GRPO}}(\theta)
=
\mathbb{E}
\left[
\frac{1}{G}
\sum_{g=1}^{G}
\frac{1}{|o_{i,g}|}
\sum_{t=1}^{|o_{i,g}|}
\ell_{i,g,t}(\theta)
\right],
\label{eq:grpo_obj}
\end{equation}
where $\delta$ controls the clipping range.

\section{Method}
To preserve prior capabilities while minimizing interference with new-task learning, we propose \underline{Co}rrectness-Conditioned \underline{KL} Regularization (CoKL), which constrains only the distribution over verified-correct responses. We derive its forward-KL objective and finite-group surrogate and integrate it into GRPO.

\subsection{Correctness Decomposition of Full-Policy KL}
For clarity, we formulate CoKL using binary verifier feedback. For each
prompt \(x\), let \(r(x,y)\in\{0,1\}\) indicate whether response \(y\)
is accepted as correct by the verifier, thereby inducing a binary
partition of the response space. When the verifier provides a continuous
score \(s(x,y)\), an analogous partition can be obtained using a
task-dependent threshold \(\tau\), with
\(r_{\tau}(x,y)=\mathbb{I}[s(x,y)\geq\tau]\). For any policy \(\pi\), we define its total
correctness probability as
\begin{equation}
Z_\pi(x)
=
\sum_y r(x,y)\pi(y\mid x),
\label{eq:correctness-probability}
\end{equation}
and define the corresponding conditional distributions as
\begin{equation}
\begin{aligned}
\pi^+(y\mid x)
&=
\frac{r(x,y)\pi(y\mid x)}
{Z_\pi(x)},\\
\pi^-(y\mid x)
&=
\frac{\bigl(1-r(x,y)\bigr)\pi(y\mid x)}
{1-Z_\pi(x)}.
\end{aligned}
\label{eq:correct-conditioned-policy}
\end{equation}
Here, \(\pi^+\) and \(\pi^-\) denote the policy conditioned on producing
a correct or incorrect response, respectively. Let \(\pi_{\mathrm{ref}}\) denote the frozen reference policy and
\(\pi_\theta\) the current policy. For readability, we omit the prompt
\(x\) below and write
\(Z_{\mathrm{ref}}=Z_{\pi_{\mathrm{ref}}}(x)\) and
\(Z_\theta=Z_{\pi_\theta}(x)\).
The full-policy forward KL admits the following decomposition:
\begin{equation}
\begin{aligned}
D_{\mathrm{KL}}
\left(
\pi_{\mathrm{ref}}
\|
\pi_\theta
\right)
=&\;
D_{\mathrm{KL}}
\left(
\operatorname{Bern}(Z_{\mathrm{ref}})
\|
\operatorname{Bern}(Z_\theta)
\right)
\\
&+
Z_{\mathrm{ref}}
D_{\mathrm{KL}}
\left(
\pi_{\mathrm{ref}}^+
\|
\pi_\theta^+
\right)
\\
&+
(1-Z_{\mathrm{ref}})
D_{\mathrm{KL}}
\left(
\pi_{\mathrm{ref}}^-
\|
\pi_\theta^-
\right).
\end{aligned}
\label{eq:kl-correctness-decomposition}
\end{equation}
The derivation is in Appendix~A. Eq.~\ref{eq:kl-correctness-decomposition} shows that full-policy KL
simultaneously constrains the total probability of correctness, the relative
distribution among correct responses, and the distribution among incorrect
responses. The first term pulls the current correctness probability $Z_\theta$ toward that of $\pi_{\mathrm{ref}}$, which may conflict with the RL-based post-training objective when the reference policy initially assigns limited probability mass to correct responses. The final term further anchors the policy to incorrect modes. These constraints are not necessary for preserving
effective solutions and may restrict the transfer of probability mass from
incorrect to correct responses. We therefore retain only the
correctness-conditioned component as the preservation target.

\subsection{CoKL Objective and Gradient Interpretation}
Motivated by the decomposition above, we define CoKL as the forward KL
divergence between the correctness-conditioned reference policy and the
correctness-conditioned current policy:
\begin{equation}
\mathcal{L}_{\mathrm{CoKL}}(x)
=
D_{\mathrm{KL}}\left(
\pi_{\mathrm{ref}}^+(\cdot\mid x)
\|
\pi_\theta^+(\cdot\mid x)
\right).
\label{eq:cfkl-def}
\end{equation}
Unlike a full-policy KL regularizer, CoKL compares the two policies only
after conditioning on correctness. It therefore penalizes changes in the relative probability
allocation among correct responses, without explicitly
anchoring incorrect responses under the reference policy. For a correct response \(y\), we have
\(\log \pi_\theta^+(y\mid x)
=
\log \pi_\theta(y\mid x)-\log Z_\theta(x)\).
Therefore, up to terms independent of \(\theta\),
\begin{equation}
\mathcal{L}_{\mathrm{CoKL}}(x)
\doteq
-
\mathbb{E}_{y\sim\pi_{\mathrm{ref}}^+(\cdot\mid x)}
\left[
\log \pi_\theta(y\mid x)
\right]
+
\log Z_\theta(x).
\label{eq:cfkl-opt-form}
\end{equation}
The \(\log Z_\theta(x)\) term removes the implicit correctness-mass
optimization in correct-only forward-KL replay:
\[
D_{\mathrm{KL}}\!\left(
\pi_{\mathrm{ref}}^+ \Vert \pi_\theta
\right)
=
D_{\mathrm{KL}}\!\left(
\pi_{\mathrm{ref}}^+ \Vert \pi_\theta^+
\right)
-\log Z_\theta.
\]
Thus, CoKL preserves the relative distribution among correct responses while leaving the total correctness probability to be optimized by the RL-based post-training objective. Its gradient is
\begin{equation}
\begin{aligned}
\nabla_\theta \mathcal{L}_{\mathrm{CoKL}}(x)
=&
-
\mathbb{E}_{y\sim\pi_{\mathrm{ref}}^+(\cdot\mid x)}
\left[
\nabla_\theta\log\pi_\theta(y\mid x)
\right]
\\
&+
\mathbb{E}_{y\sim\pi_\theta^+(\cdot\mid x)}
\left[
\nabla_\theta\log\pi_\theta(y\mid x)
\right].
\end{aligned}
\label{eq:cfkl-gradient}
\end{equation}
The first term anchors the update toward reference-supported
correct responses, while the second provides the
normalization correction required by conditioning. The full derivation are provided in Appendix~B.

\subsection{Sample-based CoKL Surrogate}
The distribution-level objective in Eq.~\ref{eq:cfkl-def} is approximated during RL-based LLM post-training using sampled responses and verifier outcomes. For each prompt group, we construct self-normalized empirical
weights over the verified-correct responses. We first present
the on-policy finite-group surrogate, where both the
reference-side and current-policy-side terms are estimated
from fresh samples. Let \(\{y_{ij}\}_{j=1}^G\sim\pi_{\mathrm{ref}}(\cdot\mid x_i)\) be reference samples with rewards \(r_{ij}=r(x_i,y_{ij})\), and let
\(\{y'_{ij}\}_{j=1}^G\sim\pi_\theta(\cdot\mid x_i)\) be current-policy samples with rewards \(r'_{ij}=r(x_i,y'_{ij})\). We define the empirical conditional weights as
\begin{equation}
\alpha_{ij}
=
\frac{r_{ij}}
{\sum_{k=1}^G r_{ik}},
\qquad
\delta_{ij}
=
\frac{r'_{ij}}
{\sum_{k=1}^G r'_{ik}}.
\label{eq:on-policy-weights}
\end{equation}
with the convention that \(\alpha_{ij}=0\) if
\(\sum_{k=1}^G r_{ik}=0\), and \(\delta_{ij}=0\) if
\(\sum_{k=1}^G r'_{ik}=0\). The on-policy CoKL surrogate is
\begin{equation}
\begin{aligned}
\widehat{\mathcal{L}}_{\mathrm{CoKL}}
=&
-\frac{1}{N}
\sum_{i=1}^{N}
\sum_{j=1}^G
\operatorname{sg}(\alpha_{ij})
\log \pi_\theta(y_{ij}\mid x_i)
\\
&+
\frac{1}{N}
\sum_{i=1}^{N}
\sum_{j=1}^G
\operatorname{sg}(\delta_{ij})
\log \pi_\theta(y'_{ij}\mid x_i).
\end{aligned}
\label{eq:cfkl-on-policy-loss}
\end{equation}
where \(\operatorname{sg}(\cdot)\) denotes stop-gradient and \(N\) is
the number of prompt groups in the minibatch. When a current-policy
group contains verified-correct responses, the two terms approximate
the reference- and current-policy expectations in
Eq.~\ref{eq:cfkl-gradient}, respectively. If no verified-correct response
is observed, the current-policy correction is unavailable and the
surrogate reduces to the reference-correct term. Since such zero-correct
groups occur more frequently when the current correctness probability is
low, this fallback acts as an implicit adaptive gate that strengthens
the recovery pressure toward correct responses, as formalized in
Appendix~B.

In the off-policy setting, the current-policy conditional term is estimated from responses generated by a behavior policy \(\pi_{\mathrm{old}}\). Let
\(\{\hat y_{ij}\}_{j=1}^G\sim\pi_{\mathrm{old}}(\cdot\mid x_i)\) denote the rollout group, with verifier rewards
\(\hat r_{ij}=r(x_i,\hat y_{ij})\). Since CoKL is defined over complete responses, we apply importance correction at the sequence level. For a response \(\hat y_{ij}\) of length \(T_{ij}\), we write its sequence log-likelihood as
\begin{equation}
\ell_\theta(\hat y_{ij}\mid x_i)
=
\sum_{t=1}^{T_{ij}}
\log \pi_\theta(
\hat y_{ij,t}\mid x_i,\hat y_{ij,<t}
).
\label{eq:sequence-logprob}
\end{equation}
The sequence-level log-ratio is then
\begin{equation}
\Delta_{ij}(\theta)
=
\ell_\theta(\hat y_{ij}\mid x_i)
-
\ell_{\mathrm{old}}(\hat y_{ij}\mid x_i).
\label{eq:log-importance-ratio}
\end{equation}
To control the variance of off-policy correction, we use the clipped sequence-level importance weight
\begin{equation}
\tilde\rho_{ij}
=
\exp\left[
\operatorname{clip}
\left(
\Delta_{ij}(\theta),
\log(1-\epsilon_{\mathrm{IS}}),
\log(1+\epsilon_{\mathrm{IS}})
\right)
\right].
\label{eq:clipped-ratio}
\end{equation}
The empirical current conditional distribution is approximated by the self-normalized weights
\begin{equation}
\gamma_{ij}
=
\frac{\tilde\rho_{ij}\hat r_{ij}}
{\sum_{k=1}^G \tilde\rho_{ik}\hat r_{ik}}.
\label{eq:off-policy-weight}
\end{equation}
If the denominator is zero, we set \(\gamma_{ij}=0\) for all responses in that group. The clipped off-policy CoKL surrogate is then
\begin{equation}
\begin{aligned}
\widehat{\mathcal{L}}_{\mathrm{CoKL}}
=&
-
\frac{1}{N}
\sum_{i=1}^{N}
\sum_{j=1}^G
\operatorname{sg}(\alpha_{ij})
\log \pi_\theta(y_{ij}\mid x_i)
\\
&+
\frac{1}{N}
\sum_{i=1}^{N}
\sum_{j=1}^G
\operatorname{sg}(\gamma_{ij})
\log \pi_\theta(\hat y_{ij}\mid x_i).
\end{aligned}
\label{eq:cfkl-practical-loss}
\end{equation}

\subsection{Integration with RL-Based LLM Post-Training}
We integrate CoKL into RL-based LLM post-training through a reference-correct buffer and a joint RL and regularization objective. Before training, \(\pi_{\mathrm{ref}}\) is used to generate \(G_{\mathrm{ref}}\) responses for each prompt in the regularization prompt set. The verifier is then applied to these responses, and prompts with no verified correct reference response are filtered out. The remaining prompt groups form a reference-correct buffer \(\mathcal{D}^+\), which provides the empirical support for the reference-side term in Eq.~\ref{eq:cfkl-practical-loss}. During training, we sample a current-task minibatch \(\mathcal{B}_{\mathrm{RL}}\) from the current-task data and independently sample a regularization minibatch \(\mathcal{B}_{\mathrm{KL}}\) from \(\mathcal{D}^+\). Current-task rollouts are used to compute \(\mathcal{L}_{\mathrm{GRPO}}\), whereas rollouts generated for \(\mathcal{B}_{\mathrm{KL}}\) instantiate the current-policy conditional term in Eq.~\ref{eq:cfkl-practical-loss}. The policy is optimized by minimizing
\begin{equation}
\begin{aligned}
\mathcal{L}_{\mathrm{total}}(\theta)
&=
\mathcal{L}_{\mathrm{GRPO}}(\theta)
+
\beta\widehat{\mathcal{L}}_{\mathrm{CoKL}}(\theta),\\
\mathcal{L}_{\mathrm{GRPO}}(\theta)
&=
-J_{\mathrm{GRPO}}(\theta).
\end{aligned}
\label{eq:total-objective}
\end{equation}
where \(J_{\mathrm{GRPO}}\) is defined in Eq.~4,
\(\widehat{\mathcal{L}}_{\mathrm{CoKL}}\) is the clipped
off-policy CoKL surrogate, and \(\beta\) controls the strength of
correctness-conditioned preservation. Minimizing
\(\mathcal{L}_{\mathrm{GRPO}}\) increases the probability of verified-correct responses, while \(\widehat{\mathcal{L}}_{\mathrm{CoKL}}\) preserves the relative probability allocation among reference-supported correct modes. The combined objective improves the current task while
preserving the relative allocation among reference-supported
correct modes on the regularization prompts, without explicitly
constraining their total correctness mass or incorrect responses. The full algorithm is summarized in Appendix~I.

\subsection{Theoretical Analysis}
\label{sec:theoretical-analysis}
We next compare the population-level optima induced by CoKL
and full-policy KL regularization. To isolate the objective-level bias introduced by different
regularizers, we first consider the setting where correctness
optimization and regularization are defined on the same prompt
distribution.

\begin{theorem}[Correctness-Mass Decoupling under CoKL]
\label{thm:cokl_decoupling}
For $\beta>0$, consider the population-level objective
\[
J_{\mathrm{CoKL}}(\pi)
=
Z_\pi
-
\beta D_{\mathrm{KL}}
\left(
\pi_{\mathrm{ref}}^+
\Vert
\pi^+
\right).
\]
The objective depends on $\pi$ only through $Z_\pi$ and
$\pi^+$ and is independent of $\pi^-$. For any two policies
$\pi_1$ and $\pi_2$ satisfying
$\pi_1^+=\pi_2^+$ and $Z_{\pi_2}>Z_{\pi_1}$,
\[
J_{\mathrm{CoKL}}(\pi_2)
-
J_{\mathrm{CoKL}}(\pi_1)
=
Z_{\pi_2}-Z_{\pi_1}
>
0.
\]
Moreover, over the full probability simplex,
$\max_\pi J_{\mathrm{CoKL}}(\pi)=1$, and every global
maximizer satisfies
\[
Z_{\pi^\star}=1,
\qquad
\pi^{+\star}=\pi_{\mathrm{ref}}^+.
\]
\end{theorem}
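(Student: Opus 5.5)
The proof is a direct computation from the definitions in Eq.~\ref{eq:correctness-probability} and Eq.~\ref{eq:correct-conditioned-policy}; I will fix a single prompt $x$ throughout. I also need a convention for the degenerate case. When $Z_\pi=0$, $\pi^+$ is undefined, and I take $D_{\mathrm{KL}}(\pi_{\mathrm{ref}}^+\Vert\pi^+)=+\infty$, so $J_{\mathrm{CoKL}}(\pi)=-\infty$. I assume $Z_{\mathrm{ref}}>0$ so that $\pi_{\mathrm{ref}}^+$ exists.

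\textbf{Step 1 (dependence only on $Z_\pi$ and $\pi^+$).} The expression $J_{\mathrm{CoKL}}(\pi)=Z_\pi-\beta D_{\mathrm{KL}}(\pi_{\mathrm{ref}}^+\Vert\pi^+)$ contains only $Z_\pi$ and $\pi^+$, since $\pi_{\mathrm{ref}}^+$ is fixed. To make independence from $\pi^-$ meaningful, I note that the map $\pi\mapsto(Z_\pi,\pi^+,\pi^-)$ is a bijection onto its natural parameter domain, with inverse
\[
\pi(y)=r(y)Z_\pi\pi^+(y)+(1-r(y))(1-Z_\pi)\pi^-(y).
\]
So $\pi^-$ can be varied freely while $Z_\pi$ and $\pi^+$ are held fixed, and $J_{\mathrm{CoKL}}$ does not change.

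\textbf{Step 2 (strict monotonicity in $Z$).} Suppose $\pi_1^+=\pi_2^+$. Then the two KL terms are identical. Both are finite, or both are $+\infty$ (in which case the comparison is vacuous or I restrict to finite values). The difference therefore equals $Z_{\pi_2}-Z_{\pi_1}$, which is positive by hypothesis.

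\textbf{Step 3 (global maximum).} For any $\pi$ we have $Z_\pi\le 1$ and $D_{\mathrm{KL}}\ge 0$, so $J_{\mathrm{CoKL}}(\pi)\le 1$. For attainment, take $\pi^\star:=\pi_{\mathrm{ref}}^+$ viewed as a full policy. It has $Z_{\pi^\star}=1$, and its conditional is $(\pi^\star)^+=\pi_{\mathrm{ref}}^+$, so the KL term vanishes and $J=1$. For the characterization, suppose $J_{\mathrm{CoKL}}(\pi)=1$. Since $\beta>0$, both inequalities must be tight: $Z_\pi=1$ and $D_{\mathrm{KL}}(\pi_{\mathrm{ref}}^+\Vert\pi^+)=0$. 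By Gibbs' inequality (strict positivity of KL off the diagonal), the second condition gives $\pi^+=\pi_{\mathrm{ref}}^+$. Here $Z_\pi=1$ also forces $\pi^-$ to be irrelevant, since incorrect responses then carry no mass.

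\textbf{Main obstacle.} The mathematics is elementary; the only delicate point is rigor at the boundary. This means handling $Z_\pi=0$, where $\pi^+$ is undefined; handling $Z_\pi=1$, where $\pi^-$ is undefined, which is harmless because $J$ does not involve $\pi^-$; and invoking the equality case of Gibbs' inequality when $\pi^+$ has support mismatches with $\pi_{\mathrm{ref}}^+$, in which case the KL is $+\infty$ and such $\pi$ are not maximizers. I would state the convention for $Z_\pi=0$ explicitly before Step 1.
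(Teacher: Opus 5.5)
Your proposal is correct and follows essentially the same route as the paper's proof: the $(Z_\pi,\pi^+,\pi^-)$ reparameterization for independence from $\pi^-$, the bound $J_{\mathrm{CoKL}}(\pi)\le Z_\pi\le 1$ attained by the boundary policy $\bar\pi$ that equals $\pi_{\mathrm{ref}}^+$ on correct responses and $0$ on incorrect ones, and tightness plus the equality case of KL to characterize maximizers. Your Step 2 compares two arbitrary policies with equal $\pi^+$ directly, whereas the paper fixes $p^-$ and takes $0<z_1<z_2<1$; together with your explicit conventions for $Z_\pi\in\{0,1\}$ and infinite KL, this is slightly more careful than the paper, which never addresses the $Z=1$ boundary where the maximizer lies.
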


The full proof is provided in Appendix C. Theorem~\ref{thm:cokl_decoupling} shows that, under the population-level CoKL
objective, increasing the total probability assigned to correct
responses does not increase the regularization penalty when
their relative allocation is unchanged. The objective therefore
permits probability mass to move from incorrect to correct
responses while preserving the correct-response structure
inherited from the reference policy. Full-policy KL behaves differently because its
Bernoulli component directly constrains the probability
allocation between the correct and incorrect response sets.
The following theorem quantifies the resulting correctness
gap at the population-level optimum.

\begin{theorem}[Strict Correctness Gap Induced by Full-Policy KL]
\label{thm:full_kl_gap}
Let $q=Z_{\mathrm{ref}}\in(0,1)$ and $\beta>0$. Consider
the full-policy forward- and reverse-KL objectives
\[
\begin{aligned}
J_{\mathrm{FKL}}(\pi)
&=
Z_\pi
-
\beta D_{\mathrm{KL}}
\left(
\pi_{\mathrm{ref}}\Vert\pi
\right),
\\
J_{\mathrm{RKL}}(\pi)
&=
Z_\pi
-
\beta D_{\mathrm{KL}}
\left(
\pi\Vert\pi_{\mathrm{ref}}
\right).
\end{aligned}
\]
Under the required absolute-continuity conditions, their
global optimizers satisfy
$\pi^{+\star}=\pi_{\mathrm{ref}}^+$ and
$\pi^{-\star}=\pi_{\mathrm{ref}}^-$. The corresponding
optimal correctness probabilities are
\[
\begin{aligned}
Z_{\mathrm{FKL}}^\star
&=
\frac{
1-\beta+
\sqrt{(\beta-1)^2+4\beta q}
}{2},
\\
Z_{\mathrm{RKL}}^\star
&=
\sigma\left(
\operatorname{logit}(q)+\frac{1}{\beta}
\right),
\end{aligned}
\]
where $\sigma(a)=(1+e^{-a})^{-1}$. Both satisfy
$q<Z^\star<1$ and are strictly decreasing in $\beta$.
Furthermore, for
$\diamond\in\{\mathrm{FKL},\mathrm{RKL}\}$,
\[
\lim_{\beta\to0^+}Z_\diamond^\star=1,
\qquad
\lim_{\beta\to\infty}Z_\diamond^\star=q.
\]
\end{theorem}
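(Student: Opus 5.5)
The plan is to reduce both objectives to a one-dimensional problem in $Z=Z_\pi$ via the correctness decomposition. For the forward KL, Eq.~\ref{eq:kl-correctness-decomposition} gives
\[
J_{\mathrm{FKL}}(\pi)=Z_\pi-\beta\Bigl[D_{\mathrm{KL}}(\operatorname{Bern}(q)\Vert\operatorname{Bern}(Z_\pi))+q\,D_{\mathrm{KL}}(\pi_{\mathrm{ref}}^+\Vert\pi^+)+(1-q)D_{\mathrm{KL}}(\pi_{\mathrm{ref}}^-\Vert\pi^-)\Bigr].
\]
For the reverse KL, the analogous chain rule with the roles of the two arguments swapped holds (same algebra as Appendix~A, writing $\pi(y)=Z_\pi\pi^+(y)$ on correct $y$ and $(1-Z_\pi)\pi^-(y)$ otherwise):
\[
D_{\mathrm{KL}}(\pi\Vert\pi_{\mathrm{ref}})=D_{\mathrm{KL}}(\operatorname{Bern}(Z_\pi)\Vert\operatorname{Bern}(q))+Z_\pi D_{\mathrm{KL}}(\pi^+\Vert\pi_{\mathrm{ref}}^+)+(1-Z_\pi)D_{\mathrm{KL}}(\pi^-\Vert\pi_{\mathrm{ref}}^-).
\]
The map $\pi\mapsto(Z_\pi,\pi^+,\pi^-)$ is a bijection onto its image: any $Z\in[0,1]$ and any pair of conditionals supported on the correct and incorrect sets can be assembled, with the caveat that the conditionals are irrelevant at $Z\in\{0,1\}$. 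The absolute-continuity conditions ensure finiteness. In both cases the conditional KL terms enter with nonnegative coefficients ($q,1-q>0$, and $Z,1-Z\ge0$), and they vanish exactly at $\pi^\pm=\pi_{\mathrm{ref}}^\pm$. Hence, for each fixed $Z$, the objective is maximized by choosing the reference conditionals. At an interior optimum this choice is forced: for FKL because $q,1-q>0$, and for RKL because $0<Z^\star<1$, which is shown below. This yields $\pi^{\pm\star}=\pi_{\mathrm{ref}}^\pm$, and what remains is the scalar objective $f(Z)=Z-\beta\,D_{\mathrm{Bern}}(\cdot)$.

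For FKL, the scalar objective is
\[
f(Z)=Z-\beta\Bigl[q\log\tfrac{q}{Z}+(1-q)\log\tfrac{1-q}{1-Z}\Bigr],
\]
which is strictly concave on $(0,1)$ and tends to $-\infty$ at both endpoints. Its unique stationary point solves $1+\beta q/Z-\beta(1-q)/(1-Z)=0$. Multiplying through by $Z(1-Z)$ gives
\[
Z^2-(1-\beta)Z-\beta q=0,
\]
and the positive root is the stated $Z_{\mathrm{FKL}}^\star$. For RKL, the scalar objective is
\[
f(Z)=Z-\beta\Bigl[Z\log\tfrac{Z}{q}+(1-Z)\log\tfrac{1-Z}{1-q}\Bigr],
\]
which is strictly concave with derivative $1-\beta[\operatorname{logit}(Z)-\operatorname{logit}(q)]$. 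The derivative is $+\infty$ at $0^+$ and $-\infty$ at $1^-$, so the maximizer is interior. Setting the derivative to zero gives $\operatorname{logit}(Z)=\operatorname{logit}(q)+1/\beta$, i.e.\ the stated sigmoid formula.

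The qualitative claims follow from these closed forms and the stationarity equations:
\begin{itemize}
\item \emph{RKL.} Since $1/\beta>0$ and $\sigma$ is strictly increasing, $q<Z^\star<1$. Because $1/\beta$ decreases in $\beta$, $Z^\star$ is strictly decreasing in $\beta$. As $\beta\to0^+$ the argument tends to $+\infty$, so $Z^\star\to1$; as $\beta\to\infty$ it tends to $\operatorname{logit}(q)$, so $Z^\star\to q$.
\item \emph{FKL, bounds.} Evaluate $p(Z)=Z^2-(1-\beta)Z-\beta q$. We have $p(q)=q^2-q=-q(1-q)<0$ and $p(1)=\beta(1-q)>0$, so the positive root lies strictly in $(q,1)$.
\item \emph{FKL, monotonicity.} Implicit differentiation gives $\partial_\beta p=Z-q>0$ at the root and $\partial_Z p=2Z-(1-\beta)>0$ there (it equals the square root). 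Hence $dZ^\star/d\beta=-(Z^\star-q)/\sqrt{\cdot}<0$.
\item \emph{FKL, limits.} As $\beta\to0^+$, the formula gives $(1+1)/2=1$. As $\beta\to\infty$, expand $\sqrt{(\beta-1)^2+4\beta q}=(\beta-1)+2q+O(1/\beta)$, giving $Z^\star\to q$.
\end{itemize}

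The main obstacle I expect is the rigorous reduction step rather than the calculus, for three reasons. First, one must state the reverse-KL decomposition, which the paper has not derived, and check that the absolute-continuity conditions keep every term finite. For FKL this means $\pi_{\mathrm{ref}}\ll\pi$; for RKL it means $\pi\ll\pi_{\mathrm{ref}}$. Second, the claim ``global optimizers satisfy $\pi^{\pm\star}=\pi_{\mathrm{ref}}^\pm$'' needs $0<Z^\star<1$, so that the conditionals are well defined and their coefficients are positive. I would establish interiority first, from the boundary behavior of $f$: for FKL, $f\to-\infty$ at the endpoints; for RKL, the endpoints are finite but the derivative has infinite magnitude there. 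Only after that would I invoke strict positivity of the coefficients to force equality of the conditionals. Third, the global optimum over the simplex must be identified by maximizing jointly. I would argue $\sup_\pi J=\sup_Z\sup_{\pi^\pm}J=\sup_Z f(Z)$, and this equality relies on the bijection noted above.
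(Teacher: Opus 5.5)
Your proposal is correct and follows essentially the same route as the paper's proof. Both arguments use the correctness decomposition and the $(z,p^+,p^-)$ parameterization to reduce each objective to a strictly concave scalar problem in $Z$, then solve the stationarity equations and read off the bounds, monotonicity, and limits. The differences are cosmetic: you locate the forward-KL root by evaluating the quadratic at $q$ and $1$ and get monotonicity by implicit differentiation, whereas the paper uses the sign of $f'$ and the decreasing inverse map $h(z)=z(1-z)/(z-q)$. Your explicit check of interiority before forcing $\pi^{\pm\star}=\pi_{\mathrm{ref}}^\pm$ in the reverse-KL case is slightly more careful than the paper's.
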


The proof is provided in Appendix D. Together, Theorems 1 and 2
establish a fundamental distinction between the population-level
objectives. The distribution-level CoKL objective preserves the
conditional structure of verified-correct responses without
penalizing their total probability, whereas full-policy KL shifts
the global optimum away from the pure-reward solution
$Z_\pi=1$ whenever $\pi_{\mathrm{ref}}$ is imperfect. These results
characterize the intrinsic objective-level bias of the regularizers,
while their practical effects under shared neural parameterization
are evaluated in the continual post-training experiments.

\section{Experiments}
We evaluate CoKL at two complementary levels. The controlled
environment provides direct access to correctness-conditioned
distributions and correct-mode coverage, while the continual
LLM experiments assess the downstream trade-off between
prior-task retention and new-task adaptation.

\begin{figure}[t]
\centering

\begin{minipage}[t]{0.22\textwidth}
    \centering
    \includegraphics[width=\linewidth]
    {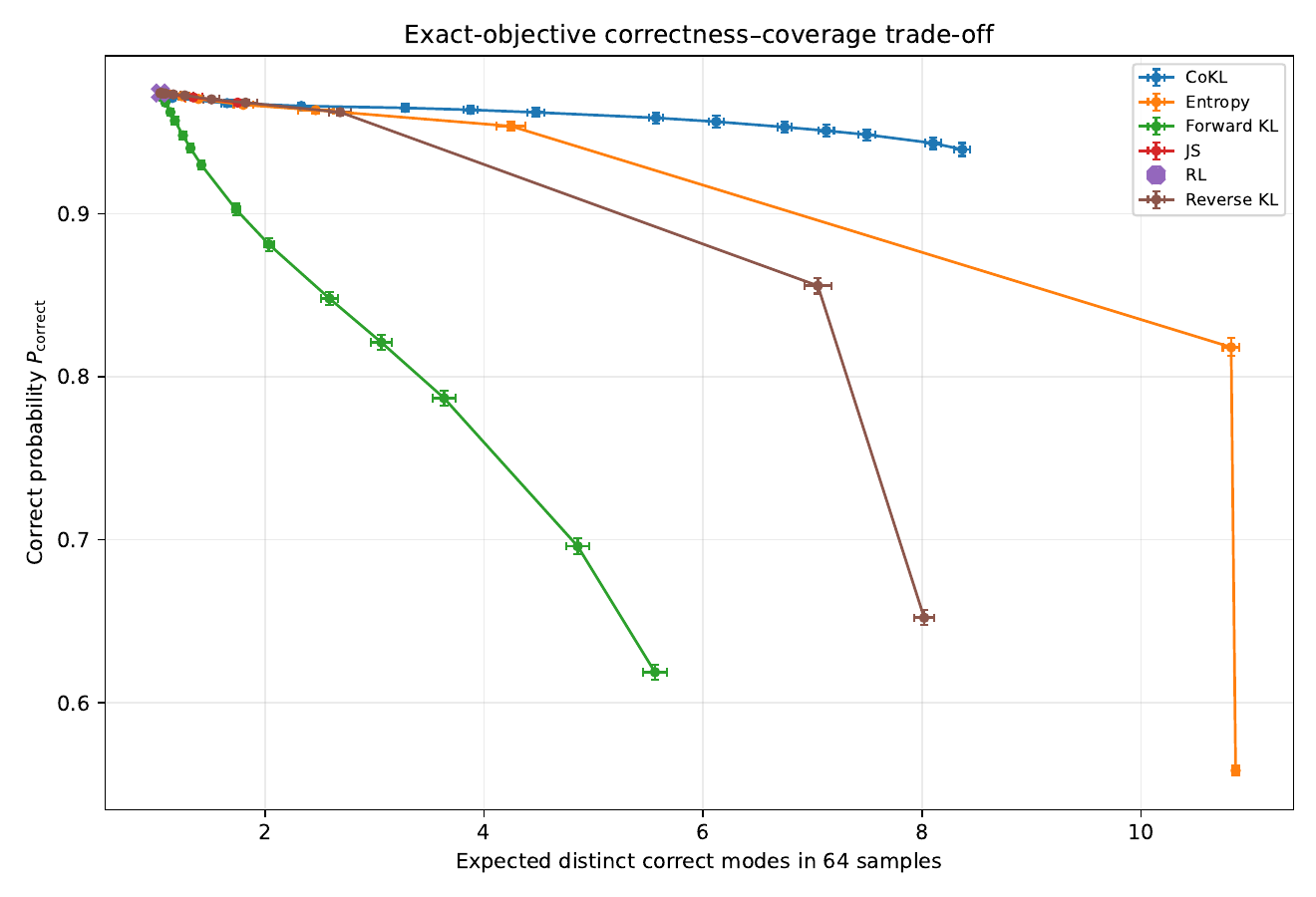}
\end{minipage}
\begin{minipage}[t]{0.22\textwidth}
    \centering
    \includegraphics[width=\linewidth]
    {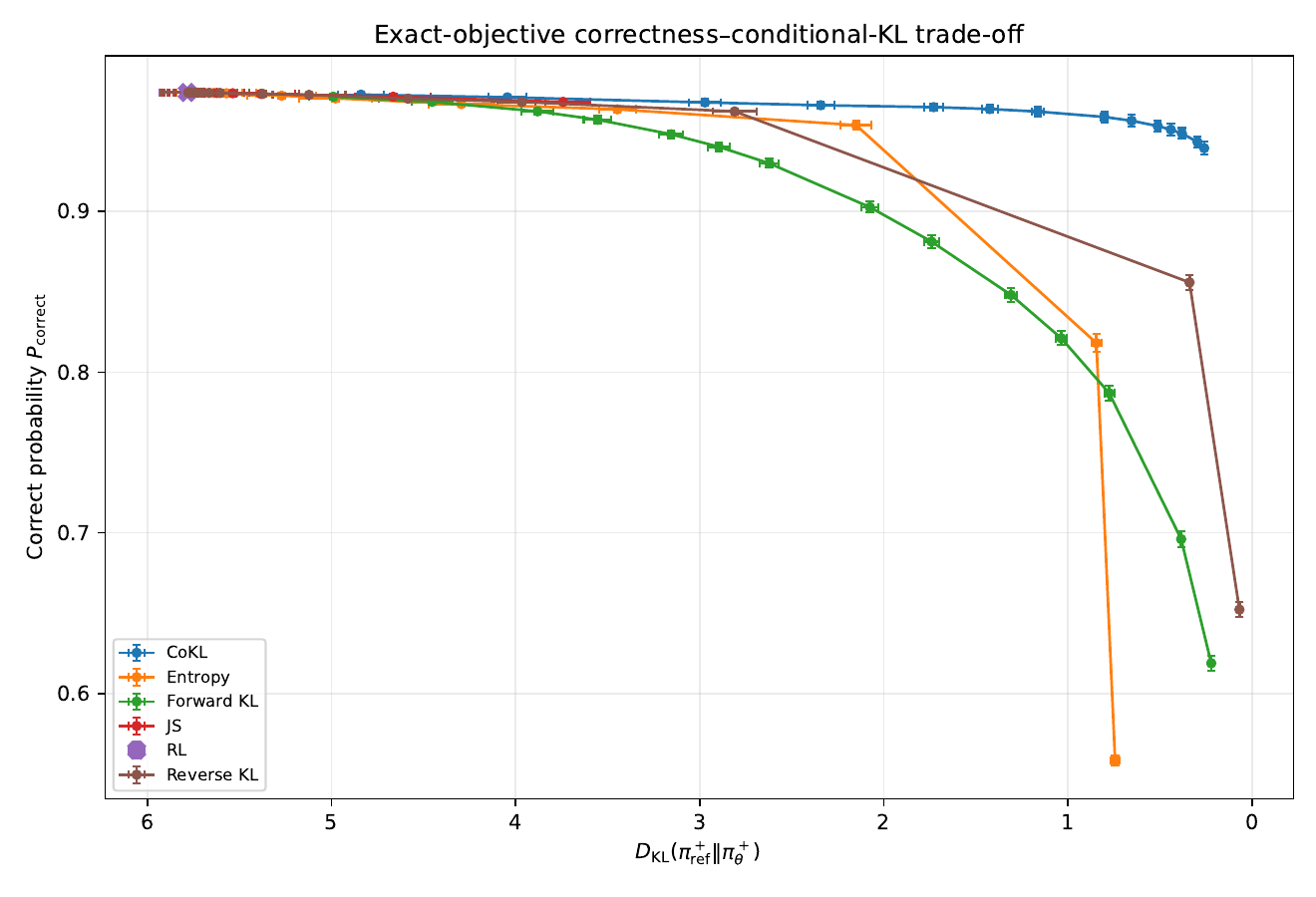}
\end{minipage}

\caption{Trade-offs between correctness and coverage@64 (Left), and between correctness and conditional KL (Right).}
\label{fig:controlled_multi_solution_results_tradeoff}
\end{figure}

\begin{figure}[t]
\centering

\begin{minipage}[t]{0.23\textwidth}
    \centering
    \includegraphics[width=\linewidth]
    {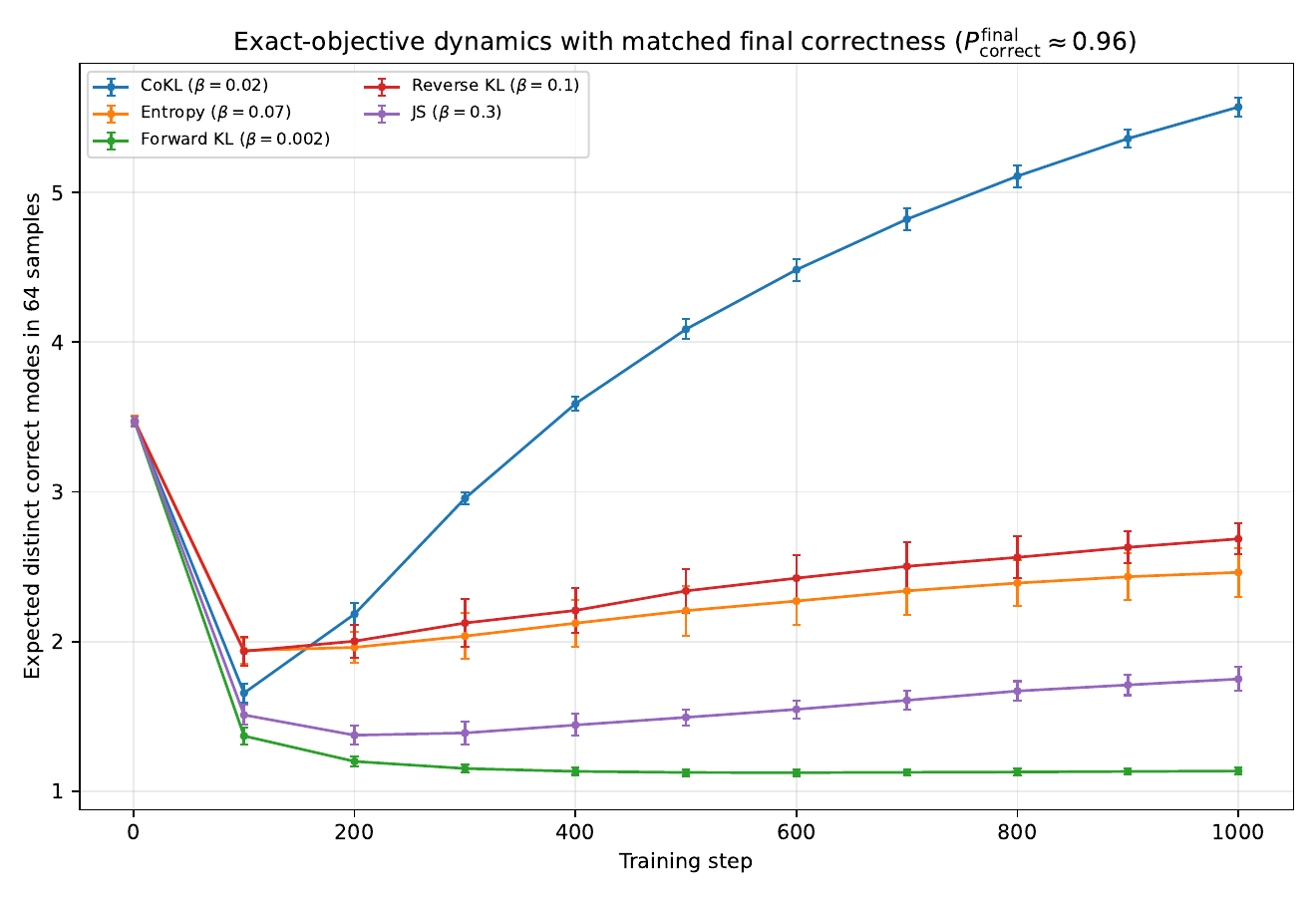}
\end{minipage}
\begin{minipage}[t]{0.23\textwidth}
    \centering
    \includegraphics[width=\linewidth]
    {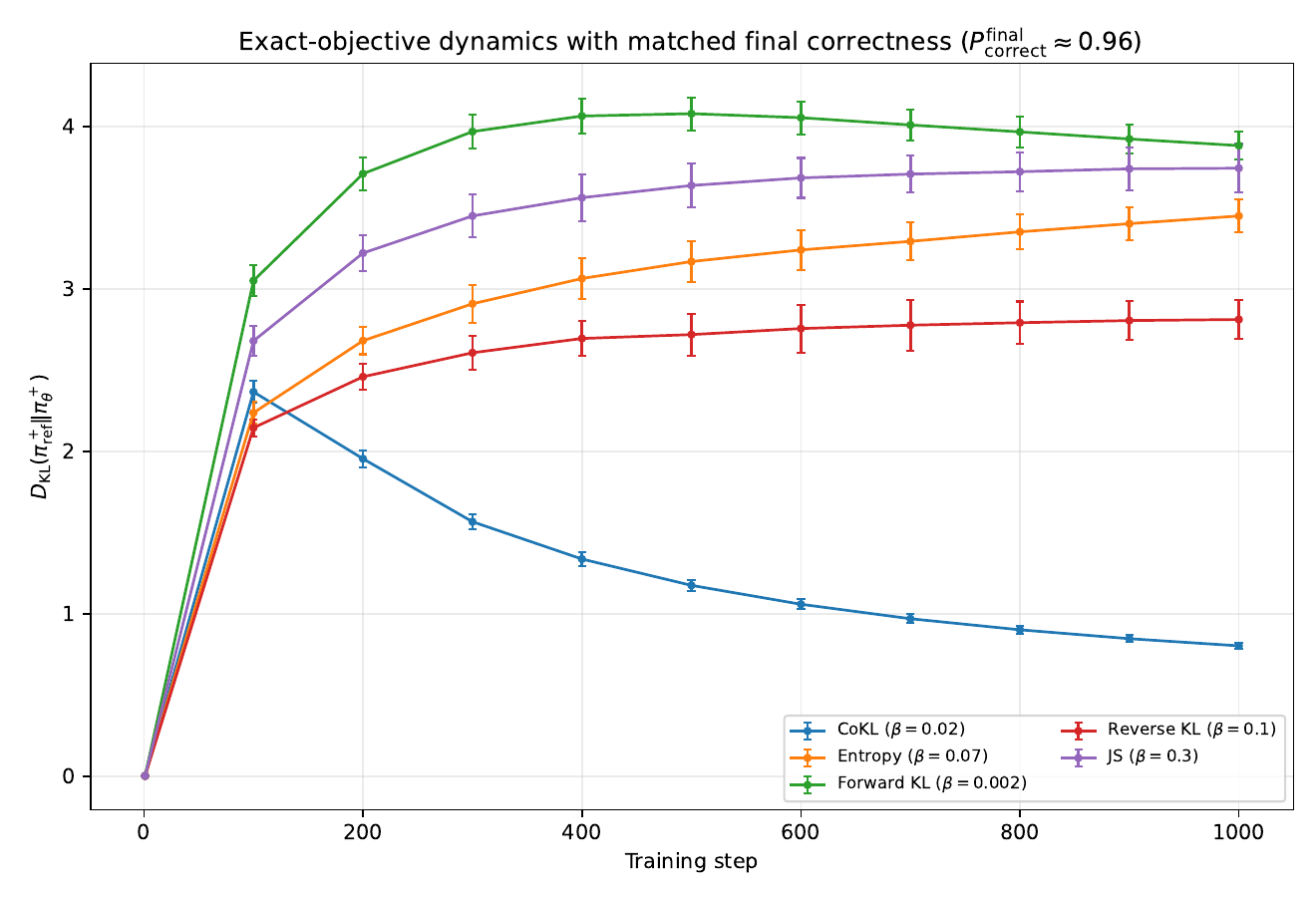}
\end{minipage}

\caption{(Left) Dynamics of correct-mode coverage@64 under the same setting. (Right) Dynamics of
\(D_{\mathrm{KL}}(\pi_{\mathrm{ref}}^+\Vert\pi_\theta^+)\).}
\label{fig:controlled_multi_solution_results_dynamic}
\end{figure}

\begin{table*}[t]
\centering
\small
\renewcommand{\arraystretch}{1.12}
\setlength{\tabcolsep}{4pt}
\begin{tabular*}{\textwidth}
{@{\extracolsep{\fill}}lcccccc}
\toprule
\textbf{Method}
& \multicolumn{4}{c}{\textbf{Normalized Mathematical Reasoning}}
& \multicolumn{1}{c}{\textbf{Normalized Chat}}
& \multicolumn{1}{c}{\textbf{Aggregate}} \\
\cmidrule(lr){2-5}
\cmidrule(lr){6-6}
\cmidrule(lr){7-7}
& \textbf{MATH-Val}
& \textbf{MATH500}
& \textbf{Olympiad}
& \textbf{Math Avg.}
& \textbf{Chat-Val}
& \textbf{Overall} \\
\midrule

\multicolumn{7}{c}{\textit{Qwen3-0.6B}} \\
\addlinespace[2pt]

Base (after Math training)
& 100.00 & 100.00 & 100.00 & 100.00
& 0.00 & 50.00 \\

GRPO w/o KL
& 0.00 & 0.00 & 0.00 & 0.00
& \underline{100.00} & 50.00 \\

Full Reverse-KL
& \textbf{66.79}
& 83.04
& 55.40
& 68.41
& 85.22
& 76.82 \\

Full Forward-KL
& 59.66
& 88.95
& 64.14
& 70.92
& 92.60
& 81.76 \\

Correct-only Reverse-KL
& 46.07
& 52.59
& 45.93
& 48.20
& 87.30
& 67.75 \\

Correct-only Forward-KL
& \underline{64.72}
& \textbf{90.86}
& \textbf{65.08}
& \textbf{73.55}
& 94.65
& \underline{84.10} \\

\textbf{CoKL (ours)}
& 61.96
& \underline{90.35}
& \underline{64.91}
& \underline{72.41}
& \textbf{100.53}
& \textbf{86.47} \\

\midrule

\multicolumn{7}{c}{\textit{Qwen3-1.7B}} \\
\addlinespace[2pt]

Base (after Math training)
& 100.00 & 100.00 & 100.00 & 100.00
& 0.00 & 50.00 \\

GRPO w/o KL
& 0.00 & 0.00 & 0.00 & 0.00
& 100.00 & 50.00 \\

Full Reverse-KL
& 33.17
& 27.55
& 18.37
& 26.36
& \underline{101.81}
& 64.09 \\

Full Forward-KL
& \textbf{88.19}
& 80.77
& 49.74
& 72.90
& 98.14
& 85.52 \\

Correct-only Reverse-KL
& 5.48
& 3.42
& 6.90
& 5.27
& \textbf{102.29}
& 53.78 \\

Correct-only Forward-KL
& 86.68
& \underline{81.73}
& \textbf{51.41}
& \underline{73.28}
& 98.44
& \underline{85.86} \\

\textbf{CoKL (ours)}
& \underline{88.11}
& \textbf{86.63}
& \underline{50.58}
& \textbf{75.11}
& 101.45
& \textbf{88.28} \\

\midrule

\multicolumn{7}{c}{\textit{Qwen3-4B}} \\
\addlinespace[2pt]

Base (after Math training)
& 100.00 & 100.00 & 100.00 & 100.00
& 0.00 & 50.00 \\

GRPO w/o KL
& 0.00 & 0.00 & 0.00 & 0.00
& 100.00 & 50.00 \\

Full Reverse-KL
& -186.44
& -684.64
& -149.25
& -340.11
& 78.25
& -130.93 \\

Full Forward-KL
& \textbf{84.11}
& \underline{39.31}
& \underline{12.37}
& \underline{45.26}
& 101.73
& \underline{73.50} \\

Correct-only Reverse-KL
& 14.32
& -3.66
& 1.18
& 3.95
& 88.97
& 46.46 \\

Correct-only Forward-KL
& \underline{83.88}
& 32.91
& \textbf{12.47}
& 43.09
& \underline{103.65}
& 73.37 \\

\textbf{CoKL (ours)}
& 83.06
& \textbf{43.51}
& 10.75
& \textbf{45.77}
& \textbf{104.02}
& \textbf{74.90} \\

\bottomrule
\end{tabular*}

\caption{Dual-anchor normalized results. Math Avg. averages the three normalized math scores, while Overall equally averages Math Avg. and normalized Chat-Val. Best and second-best post-training results are \textbf{bolded} and \underline{underlined}.}
\label{tab:dual-anchor-normalized-results}
\end{table*}

\subsection{Controlled Multi-Solution RL Environment}
\paragraph{Environment.}
We consider a controlled contextual bandit with multiple valid solutions.
Each input is generated from a latent problem cluster \(z\), for which a
subset \(\mathcal{C}_z\) of the finite action space receives binary reward.
To model multiple reasoning modes, we construct a long-tailed oracle
distribution over the correct actions, pretrain a shared-parameter MLP to
approximate this distribution, and use the resulting frozen model as the
common initialization for all methods. We compare unregularized RL with
entropy, forward KL, reverse KL, JS, and CoKL regularization. All methods are trained using the exact expected-reward objective and exact regularization over the finite action space. Implementation
details are provided in the Appendix E.1.

\paragraph{Results.} For each regularized
method, we sweep the regularization coefficient and evaluate the final
policy using the total correct probability \(P_{\mathrm{corr}}\),
correct-mode coverage@64, and the correctness-conditioned KL
\(D_{\mathrm{KL}}(\pi_{\mathrm{ref}}^+\Vert\pi_\theta^+)\), with all
results averaged over five random seeds. Figure~\ref{fig:controlled_multi_solution_results_tradeoff} shows the resulting
trade-offs. CoKL achieves higher correct-mode coverage and a smaller
correctness-conditioned KL while maintaining high correctness, whereas the
other regularizers either sacrifice correctness or exhibit stronger
deviation from the reference correct-mode distribution. These results
indicate that CoKL better mitigates correct-mode collapse and provides a
more favorable trade-off between correctness improvement and correct-mode
preservation. To further examine the optimization dynamics, we select, for each method,
the coefficient whose final correctness is closest to
\(P_{\mathrm{corr}}=0.96\). Figure~\ref{fig:controlled_multi_solution_results_dynamic}
shows the evolution of coverage@64 and the correctness-conditioned KL
during training. Under comparable final correctness, CoKL consistently
achieves higher correct-mode coverage and a substantially lower
conditional KL, demonstrating a more favorable balance between
correctness improvement and correct-mode preservation.

\subsection{Continual Learning under Task Shift}

\paragraph{Tasks and Datasets.}
We evaluate all methods under a two-stage sequential training protocol consisting of mathematical reasoning followed by general chat, assessing both the retention of previously acquired reasoning capabilities and continued adaptation to the subsequent task. For the math stage, we randomly sample 50K training examples from Nemotron-Math-v2~\cite{du2025nemotronmath} and use an additional 1K examples as the evaluation set. For the chat stage, we use WildChat~\cite{bhaskar2025language} and
Skywork-Reward-Llama-3.1-8B-v0.2~\cite{liu2024skywork} to score model
responses. Dataset statistics are in Appendix~E.2.

\paragraph{Baselines.}
We compare CoKL with the following GRPO-based baselines: (1) GRPO w/o KL removes the KL regularizer; (2) Full Reverse-KL applies reverse-KL regularization to the full response distribution; (3) Full Forward-KL applies forward-KL regularization to the full response distribution using all sampled responses; (4) Correct-only Reverse-KL applies reverse-KL regularization only to verified-correct responses; and (5) Correct-only Forward-KL uses only verified-correct responses for forward-KL-style supervised replay. All baselines use the same GRPO training setup and differ only in the auxiliary regularization objective.

\paragraph{Implementation.} We use the Qwen3 family~\cite{yang2025qwen3} as the base models, including
Qwen3-0.6B, Qwen3-1.7B and Qwen3-4B. For all models, we enable thinking
mode and set the maximum generation length to \(8{,}192\) tokens. Further implementation details are provided in Appendix~E.3, and the
prompt templates in Appendix~H.

\paragraph{Evaluation.}
We evaluate all methods on two task categories. Mathematical reasoning is
evaluated on MATH-500~\cite{hendrycks2021measuring},
OlympiadBench~\cite{he2024olympiadbench}, and the Math-Val validation
set, while chat performance is evaluated on WildChat-Val
~\cite{bhaskar2025language}. We sample with temperature \(0.7\), top-\(p=1.0\), and top-\(k=-1\), and report average@8 throughout. Mathematical reasoning is evaluated by exact match and chat performance by reward-model scores. We apply dual-anchor normalization before aggregation to align their scales; details and unnormalized results are provided in Appendix~E.5.

\paragraph{Main Results.}
As shown in Table~\ref{tab:dual-anchor-normalized-results}, GRPO w/o KL and the Reverse-KL variants often suffer from
substantial forgetting of previously acquired mathematical capabilities.
The Forward-KL variants generally provide stronger capability retention,
but may constrain exploration and adaptation on the new task. CoKL
achieves the most favorable balance between prior-capability retention
and new-task learning, yielding the highest aggregate performance across
all model scales. Compared with the strongest baseline, CoKL improves
Overall by \(2.37\), \(2.42\), and \(1.40\) points on Qwen3-0.6B,
Qwen3-1.7B, and Qwen3-4B, respectively, demonstrating the effectiveness
of the proposed method.

\begin{table}[t]
\centering
\small
\renewcommand{\arraystretch}{1.12}
\setlength{\tabcolsep}{4pt}
\begin{tabular*}{\columnwidth}{@{\extracolsep{\fill}}lccc}
\toprule
\textbf{Method}
& \textbf{Math Avg.}
& \textbf{Chat-Val}
& \textbf{Overall} \\
\midrule
\multicolumn{4}{c}{\textit{Models based on Qwen3-1.7B}} \\
\midrule

CoKL
& \textbf{75.11}
& \textbf{101.45}
& \textbf{88.28} \\

w/o term1
& 3.85
& \underline{100.83}
& 52.34 \\

w/o term2
& \underline{73.28}
& 98.44
& \underline{85.86} \\

\bottomrule
\end{tabular*}

\caption{Ablation results under dual-anchor normalization.}
\label{tab:qwen3-ablation-study}
\end{table}

\subsection{Ablation Studies}
\paragraph{Ablation Study of the Loss Structure.}
To examine the necessity of the two loss terms in CoKL, we conduct an ablation study on Qwen3-1.7B by removing each term in Eq.~17, denoted as \textit{w/o term1} and \textit{w/o term2}, respectively. Removing the second term reduces the objective to the Correct-only Forward-KL formulation. As shown in Table~\ref{tab:qwen3-ablation-study}, w/o term1 leads to a substantial performance degradation, primarily due to pronounced forgetting on the mathematical reasoning tasks. This result confirms that the first term is essential for preserving the model's existing capabilities. In contrast, \textit{w/o term2} underperforms CoKL on the
chat task, demonstrating the importance of the normalization
correction. This term removes the implicit correctness-mass
component of correct-only reference replay, thereby isolating
within-correct distribution preservation and reducing interference
with subsequent-task adaptation.

\paragraph{Batch-Level Correctness Floor.}
Since CoKL does not directly constrain total correctness mass, we examine an explicit batch-level correctness floor. For \(p\in\{\mathrm{ref},\theta\}\), define
\begin{equation}
\widehat{Z}_{p}
=
\frac{1}{NG}\sum_{i=1}^{N}\sum_{j=1}^{G}r^{p}_{i,j},
\qquad
g_{\mathcal B}
=
[\widehat{Z}_{\mathrm{ref}}-\widehat{Z}_{\theta}]_{+}.
\end{equation}
The auxiliary loss and combined objective are
\begin{equation}
\begin{aligned}
\mathcal{L}_{\mathrm{floor}}
&=
-\frac{\operatorname{sg}(g_{\mathcal B})}{NG}
\sum_{i=1}^{N}\sum_{j=1}^{G}
r_{i,j}\ell_{\theta}(y_{i,j}\mid x_i),\\
\mathcal{L}_{\mathrm{total}}
&=
\mathcal{L}_{\mathrm{GRPO}}
+\beta\mathcal{L}_{\mathrm{CoKL}}
+\lambda_{\mathrm{floor}}\mathcal{L}_{\mathrm{floor}}.
\end{aligned}
\label{eq:correctness-floor}
\end{equation}
We denote this variant as \emph{CoKL \(+\mathcal{L}_{\mathrm{floor}}\)}
with \(\lambda_{\mathrm{floor}}=1.0\); since the loss is gated by
\(\operatorname{sg}(g_{\mathcal B})\in[0,1]\), its effective strength
adapts to the correctness deficit rather than acting as a hard constraint.
As shown in Table~\ref{tab:correctness-floor}, this mild, one-sided mass
constraint yields no consistent retention gains while restricting
adaptation and reducing the overall score. This echoes our analysis: an
explicit correctness-mass term reintroduces the coupling between retention
and adaptation that CoKL avoids. Together with the w/o term2 ablation, it
further suggests that, in our setting, forgetting acts more through drift
within the correctness-conditioned distribution than through proportional
loss of correctness mass.

\begin{table}[t]
\centering
\small
\renewcommand{\arraystretch}{1.12}
\setlength{\tabcolsep}{4pt}
\begin{tabular*}{\columnwidth}{@{\extracolsep{\fill}}lccc}
\toprule
\textbf{Method}
& \textbf{Math Avg.}
& \textbf{Chat-Val}
& \textbf{Overall} \\
\midrule

\multicolumn{4}{c}{\textit{Models based on Qwen3-0.6B}} \\
\midrule

CoKL
& \textbf{72.41}
& \textbf{100.53}
& \textbf{86.47} \\

CoKL + $\mathcal{L}_{\mathrm{floor}}$
& 71.65
& 92.49
& 82.07 \\

\midrule

\multicolumn{4}{c}{\textit{Models based on Qwen3-1.7B}} \\
\midrule

CoKL
& 75.11
& \textbf{101.45}
& \textbf{88.28} \\

CoKL + $\mathcal{L}_{\mathrm{floor}}$
& \textbf{75.79}
& 99.63
& 87.71 \\

\midrule

\multicolumn{4}{c}{\textit{Models based on Qwen3-4B}} \\
\midrule

CoKL
& \textbf{45.77}
& \textbf{104.02}
& \textbf{74.90} \\

CoKL + $\mathcal{L}_{\mathrm{floor}}$
& 43.02
& 100.26
& 71.64 \\

\bottomrule
\end{tabular*}

\caption{Ablation of the batch-level correctness floor under dual-anchor normalization. The best result is shown in \textbf{bold}.}
\label{tab:correctness-floor}
\end{table}

\paragraph{KL Batch Analysis.}
We examine KL batch sizes of 8, 16, and 32, as shown in Figure~\ref{fig:cokl_analysis} (Left). Increasing the batch size from 8 to 16 substantially improves mathematical retention, while the additional gain from 16 to 32 is modest. Chat performance varies only slightly across the three settings. These results suggest that larger KL batches provide a more reliable estimate of the regularization objective and can strengthen capability retention without substantially affecting new-task adaptation. To limit the additional rollout cost, we use a KL batch size of 8 in the main experiments.

\paragraph{Sensitivity Analysis of the Loss Coefficient $\beta$.} We analyze $\beta \in \{0.1,1,10\}$. As shown in Figure~\ref{fig:cokl_analysis} (Right), a large $\beta$ restricts chat-task adaptation, whereas a small $\beta$ weakens mathematical capability retention. We therefore set $\beta=1.0$ to balance retention and adaptation.

\begin{figure}[t]
    \centering
    \begin{minipage}[t]{0.48\columnwidth}
        \centering
        \includegraphics[width=\linewidth]{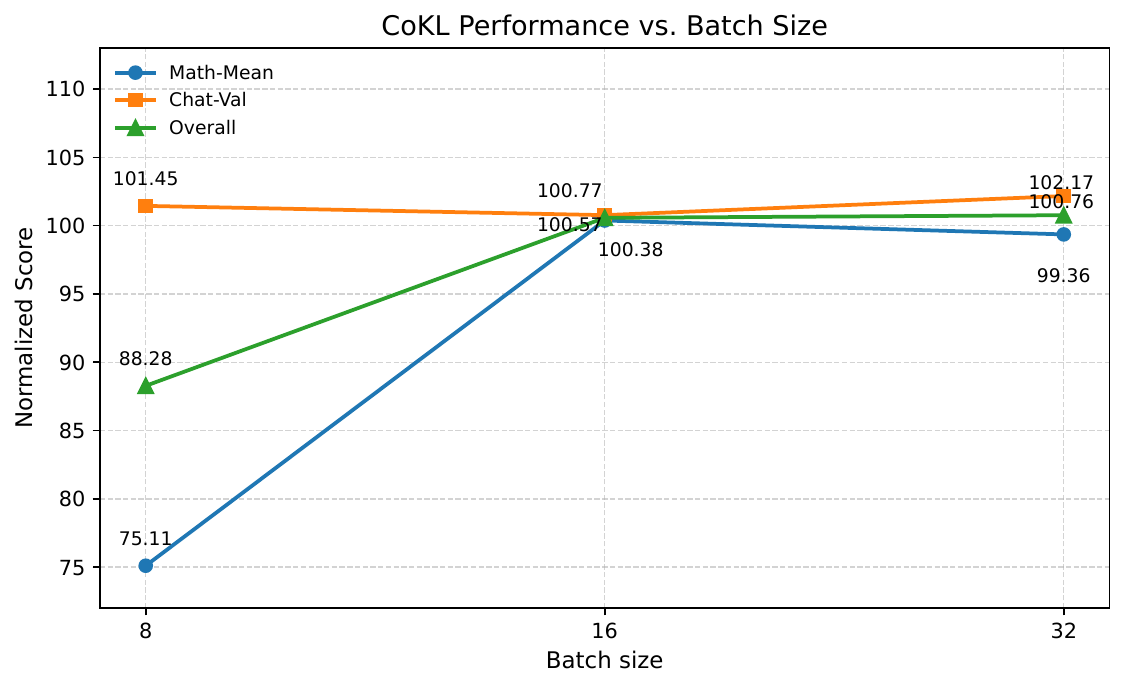}
    \end{minipage}
    \begin{minipage}[t]{0.48\columnwidth}
        \centering
        \includegraphics[width=\linewidth]{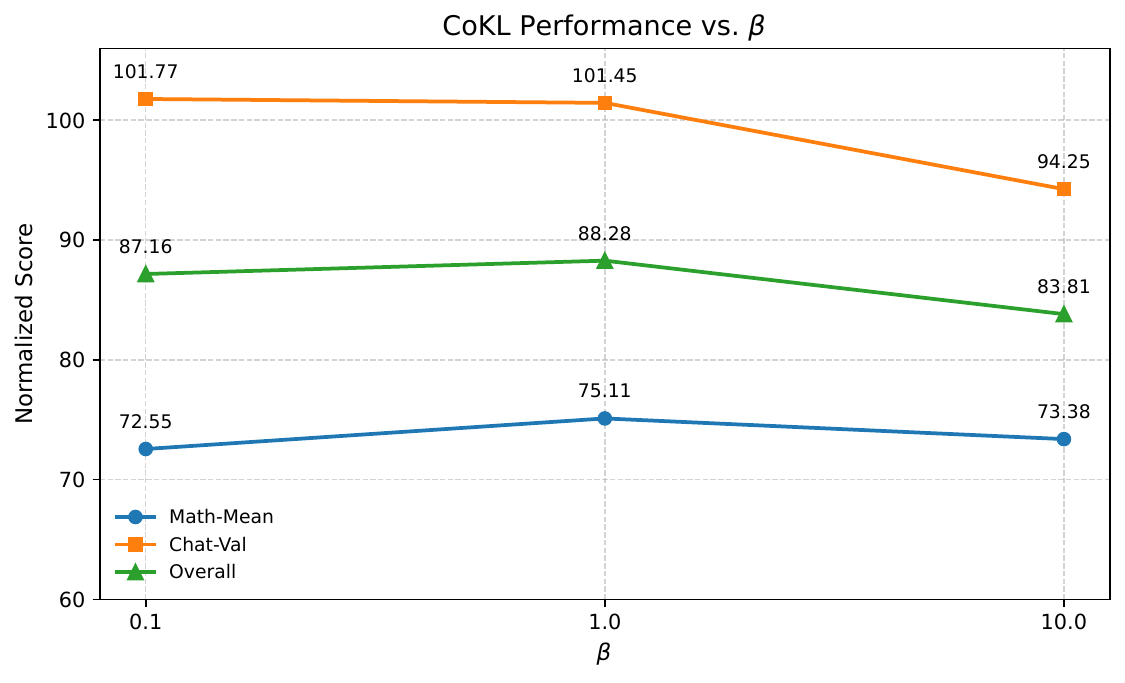}
    \end{minipage}
    \caption{Effects of KL batch size (Left) and loss coefficient $\beta$ (Right) on CoKL performance using Qwen3-1.7B.}
    \label{fig:cokl_analysis}
\end{figure}

\section{Conclusion}
In this paper, we proposed CoKL, a correctness-conditioned
KL regularizer that narrows the preservation constraint from
the full response distribution to the conditional distribution
over verified-correct responses. At the population level under a shared prompt distribution, CoKL
decouples total correctness probability from the relative probability
allocation among correct responses and, unlike
full-policy forward and reverse KL regularization, avoids a
strict optimal correctness gap under an imperfect reference
policy. Controlled multi-solution experiments directly
validate its behavior in preserving the correctness-conditioned
distribution under improved correctness, while continual LLM
post-training experiments demonstrate a favorable practical
trade-off between prior-capability retention and new-task
adaptation.

\bibliography{aaai2027}

\begin{thebibliography}{40}
\providecommand{\natexlab}[1]{#1}

\bibitem[{Bai et~al.(2022)Bai, Jones, Ndousse, Askell, Chen, DasSarma, Drain, Fort, Ganguli, Henighan et~al.}]{bai2022training}
Bai, Y.; Jones, A.; Ndousse, K.; Askell, A.; Chen, A.; DasSarma, N.; Drain, D.; Fort, S.; Ganguli, D.; Henighan, T.; et~al. 2022.
\newblock Training a helpful and harmless assistant with reinforcement learning from human feedback.
\newblock \emph{arXiv preprint arXiv:2204.05862}.

\bibitem[{Bhaskar, Ye, and Chen(2025)}]{bhaskar2025language}
Bhaskar, A.; Ye, X.; and Chen, D. 2025.
\newblock Language models that think, chat better.
\newblock \emph{arXiv preprint arXiv:2509.20357}.

\bibitem[{Cai et~al.(2026)Cai, Liang, Wang, Wang, Zhang, Xia, Sun, Ye, and Shi}]{cai2026advancing}
Cai, M.; Liang, Y.; Wang, L.; Wang, Y.; Zhang, Y.; Xia, L.; Sun, Z.; Ye, X.; and Shi, D. 2026.
\newblock Advancing General-Purpose Reasoning Models with Modular Gradient Surgery.
\newblock \emph{arXiv preprint arXiv:2602.02301}.

\bibitem[{Chen et~al.(2026)Chen, Lu, Zhao, Wang, Yue, Song, and Huang}]{chen2026does}
Chen, Z.; Lu, R.; Zhao, A.; Wang, Z.; Yue, Y.; Song, S.; and Huang, G. 2026.
\newblock Does reinforcement learning really incentivize reasoning capacity in llms beyond the base model?
\newblock \emph{Advances in Neural Information Processing Systems}, 38: 57654--57689.

\bibitem[{Cui et~al.(2025)Cui, Zhang, Chen, Yuan, Wang, Zuo, Li, Fan, Chen, Chen et~al.}]{cui2025entropy}
Cui, G.; Zhang, Y.; Chen, J.; Yuan, L.; Wang, Z.; Zuo, Y.; Li, H.; Fan, Y.; Chen, H.; Chen, W.; et~al. 2025.
\newblock The entropy mechanism of reinforcement learning for reasoning language models.
\newblock \emph{arXiv preprint arXiv:2505.22617}.

\bibitem[{Deng et~al.(2025)Deng, Wei, Yu, and Wu}]{deng2025unlocking}
Deng, W.; Wei, L.; Yu, C.; and Wu, T. 2025.
\newblock Unlocking reasoning capabilities in llms via reinforcement learning exploration.
\newblock \emph{arXiv preprint arXiv:2510.03865}.

\bibitem[{Du et~al.(2025)Du, Toshniwal, Kisacanin, Mahdavi, Moshkov, Armstrong, Ge, Minasyan, Chen, and Gitman}]{du2025nemotronmath}
Du, W.; Toshniwal, S.; Kisacanin, B.; Mahdavi, S.; Moshkov, I.; Armstrong, G.; Ge, S.; Minasyan, E.; Chen, F.; and Gitman, I. 2025.
\newblock Nemotron-Math: Efficient Long-Context Distillation of Mathematical Reasoning from Multi-Mode Supervision.
\newblock \emph{arXiv preprint arXiv:2512.15489}.

\bibitem[{Guo et~al.(2025)Guo, Yang, Zhang, Song, Wang, Zhu, Xu, Zhang, Ma, Bi et~al.}]{guo2025deepseek}
Guo, D.; Yang, D.; Zhang, H.; Song, J.; Wang, P.; Zhu, Q.; Xu, R.; Zhang, R.; Ma, S.; Bi, X.; et~al. 2025.
\newblock Deepseek-r1: Incentivizing reasoning capability in llms via reinforcement learning.
\newblock \emph{arXiv preprint arXiv:2501.12948}.

\bibitem[{GX-Chen et~al.(2025)GX-Chen, Prakash, Guo, Fergus, and Ranganath}]{gx2025kl}
GX-Chen, A.; Prakash, J.; Guo, J.; Fergus, R.; and Ranganath, R. 2025.
\newblock KL-Regularized Reinforcement Learning is Designed to Mode Collapse.
\newblock \emph{arXiv preprint arXiv:2510.20817}.

\bibitem[{He et~al.(2024)He, Luo, Bai, Hu, Thai, Shen, Hu, Han, Huang, Zhang et~al.}]{he2024olympiadbench}
He, C.; Luo, R.; Bai, Y.; Hu, S.; Thai, Z.; Shen, J.; Hu, J.; Han, X.; Huang, Y.; Zhang, Y.; et~al. 2024.
\newblock Olympiadbench: A challenging benchmark for promoting agi with olympiad-level bilingual multimodal scientific problems.
\newblock In \emph{Proceedings of the 62nd Annual Meeting of the Association for Computational Linguistics (Volume 1: Long Papers)}, 3828--3850.

\bibitem[{Hendrycks et~al.(2021)Hendrycks, Burns, Kadavath, Arora, Basart, Tang, Song, and Steinhardt}]{hendrycks2021measuring}
Hendrycks, D.; Burns, C.; Kadavath, S.; Arora, A.; Basart, S.; Tang, E.; Song, D.; and Steinhardt, J. 2021.
\newblock Measuring mathematical problem solving with the math dataset.
\newblock \emph{arXiv preprint arXiv:2103.03874}.

\bibitem[{Kotha, Springer, and Raghunathan(2024)}]{kotha2024understanding}
Kotha, S.; Springer, J.; and Raghunathan, A. 2024.
\newblock Understanding catastrophic forgetting in language models via implicit inference.
\newblock In \emph{International Conference on Learning Representations}, volume 2024, 24110--24139.

\bibitem[{Kruszewski et~al.(2025)Kruszewski, Erbacher, Rozen, and Dymetman}]{kruszewski2025whatever}
Kruszewski, G.; Erbacher, P.; Rozen, J.; and Dymetman, M. 2025.
\newblock Whatever Remains Must Be True: Filtering Drives Reasoning in LLMs, Shaping Diversity.
\newblock \emph{arXiv preprint arXiv:2512.05962}.

\bibitem[{Kwon et~al.(2023)Kwon, Li, Zhuang, Sheng, Zheng, Yu, Gonzalez, Zhang, and Stoica}]{kwon2023efficient}
Kwon, W.; Li, Z.; Zhuang, S.; Sheng, Y.; Zheng, L.; Yu, C.~H.; Gonzalez, J.; Zhang, H.; and Stoica, I. 2023.
\newblock Efficient memory management for large language model serving with pagedattention.
\newblock In \emph{Proceedings of the 29th symposium on operating systems principles}, 611--626.

\bibitem[{Lee, Kang, and Hwang(2026)}]{lee2026sage}
Lee, C.; Kang, M.; and Hwang, S.~J. 2026.
\newblock SAGE: Shaping Anchors for Guided Exploration in RLVR of LLMs.
\newblock \emph{arXiv preprint arXiv:2605.18864}.

\bibitem[{Li et~al.(2025{\natexlab{a}})Li, Zhou, Brunswic, Ghaddar, Sun, Ma, Luo, Li, Coates, Hao et~al.}]{li2025omni}
Li, D.; Zhou, J.; Brunswic, L.~M.; Ghaddar, A.; Sun, Q.; Ma, L.; Luo, Y.; Li, D.; Coates, M.; Hao, J.; et~al. 2025{\natexlab{a}}.
\newblock Omni-Thinker: Scaling Multi-Task RL in LLMs with Hybrid Reward and Task Scheduling.
\newblock \emph{arXiv preprint arXiv:2507.14783}.

\bibitem[{Li et~al.(2025{\natexlab{b}})Li, Zhou, Hao, Liu, Miao, Pang, Tan, Chu, Wang, Pan et~al.}]{li2025choice}
Li, L.; Zhou, Z.; Hao, J.; Liu, J.~K.; Miao, Y.; Pang, W.; Tan, X.; Chu, W.; Wang, Z.; Pan, S.; et~al. 2025{\natexlab{b}}.
\newblock The choice of divergence: A neglected key to mitigating diversity collapse in reinforcement learning with verifiable reward.
\newblock \emph{arXiv preprint arXiv:2509.07430}.

\bibitem[{Li et~al.(2025{\natexlab{c}})Li, Pan, Lin, Sun, He, and Wu}]{li2025can}
Li, Y.; Pan, Z.; Lin, H.; Sun, M.; He, C.; and Wu, L. 2025{\natexlab{c}}.
\newblock Can one domain help others? a data-centric study on multi-domain reasoning via reinforcement learning.
\newblock \emph{arXiv preprint arXiv:2507.17512}.

\bibitem[{Lin et~al.(2026{\natexlab{a}})Lin, Gong, Tang, Li, Wang, Ma, Huang, Tang, and Lu}]{lin2026expo}
Lin, M.; Gong, Z.; Tang, M.; Li, Q.; Wang, C.; Ma, J.; Huang, S.; Tang, K.; and Lu, H. 2026{\natexlab{a}}.
\newblock expo: Exploration-prioritized policy optimization via adaptive kl regulation and gaussian curriculum sampling.
\newblock \emph{arXiv preprint arXiv:2605.09923}.

\bibitem[{Lin et~al.(2024)Lin, Lin, Xiong, Diao, Liu, Zhang, Pan, Wang, Hu, Zhang et~al.}]{lin2024mitigating}
Lin, Y.; Lin, H.; Xiong, W.; Diao, S.; Liu, J.; Zhang, J.; Pan, R.; Wang, H.; Hu, W.; Zhang, H.; et~al. 2024.
\newblock Mitigating the alignment tax of rlhf.
\newblock In \emph{Proceedings of the 2024 Conference on Empirical Methods in Natural Language Processing}, 580--606.

\bibitem[{Lin et~al.(2026{\natexlab{b}})Lin, Wang, Cao, Chai, Wang, Lu, Lin, He, and Yin}]{lin2026resrl}
Lin, Z.; Wang, X.; Cao, J.; Chai, J.; Wang, L.; Lu, X.; Lin, W.; He, R.; and Yin, G. 2026{\natexlab{b}}.
\newblock Resrl: Boosting llm reasoning via negative sample projection residual reinforcement learning.
\newblock \emph{arXiv preprint arXiv:2605.00380}.

\bibitem[{Lin et~al.(2025)Lin, Wang, Cao, Chai, Yin, Lin, and He}]{lin2025rest}
Lin, Z.; Wang, X.; Cao, J.; Chai, J.; Yin, G.; Lin, W.; and He, R. 2025.
\newblock ResT: Reshaping Token-Level Policy Gradients for Tool-Use Large Language Models.
\newblock \emph{arXiv preprint arXiv:2509.21826}.

\bibitem[{Liu et~al.(2024)Liu, Zeng, Liu, Yan, He, Wang, Yan, Liu, and Zhou}]{liu2024skywork}
Liu, C.~Y.; Zeng, L.; Liu, J.; Yan, R.; He, J.; Wang, C.; Yan, S.; Liu, Y.; and Zhou, Y. 2024.
\newblock Skywork-Reward: Bag of Tricks for Reward Modeling in LLMs.
\newblock \emph{arXiv preprint arXiv:2410.18451}.

\bibitem[{Liu et~al.(2025)Liu, Chen, Li, Qi, Pang, Du, Lee, and Lin}]{liu2025understanding}
Liu, Z.; Chen, C.; Li, W.; Qi, P.; Pang, T.; Du, C.; Lee, W.~S.; and Lin, M. 2025.
\newblock Understanding r1-zero-like training: A critical perspective.
\newblock \emph{arXiv preprint arXiv:2503.20783}.

\bibitem[{Lochab, Li, and Zhang(2026)}]{lochab2026uniform}
Lochab, A.; Li, B.; and Zhang, R. 2026.
\newblock Uniform-Correct Policy Optimization: Breaking RLVR's Indifference to Diversity.
\newblock \emph{arXiv preprint arXiv:2605.00365}.

\bibitem[{Lu et~al.(2026)Lu, Wang, Chai, Yin, Lin, Chen, Luo, Zhuang, Ban, and Wang}]{lu2026contextual}
Lu, X.; Wang, X.; Chai, J.; Yin, G.; Lin, W.; Chen, Z.; Luo, Y.; Zhuang, F.; Ban, Y.; and Wang, D. 2026.
\newblock Contextual Rollout Bandits for Reinforcement Learning with Verifiable Rewards.
\newblock \emph{arXiv preprint arXiv:2602.08499}.

\bibitem[{Ouyang et~al.(2022)Ouyang, Wu, Jiang, Almeida, Wainwright, Mishkin, Zhang, Agarwal, Slama, Ray et~al.}]{ouyang2022training}
Ouyang, L.; Wu, J.; Jiang, X.; Almeida, D.; Wainwright, C.; Mishkin, P.; Zhang, C.; Agarwal, S.; Slama, K.; Ray, A.; et~al. 2022.
\newblock Training language models to follow instructions with human feedback.
\newblock \emph{Advances in neural information processing systems}, 35: 27730--27744.

\bibitem[{Shao et~al.(2024)Shao, Wang, Zhu, Xu, Song, Bi, Zhang, Zhang, Li, Wu et~al.}]{shao2024deepseekmath}
Shao, Z.; Wang, P.; Zhu, Q.; Xu, R.; Song, J.; Bi, X.; Zhang, H.; Zhang, M.; Li, Y.; Wu, Y.; et~al. 2024.
\newblock Deepseekmath: Pushing the limits of mathematical reasoning in open language models.
\newblock \emph{arXiv preprint arXiv:2402.03300}.

\bibitem[{Sheng et~al.(2024)Sheng, Zhang, Ye, Wu, Zhang, Zhang, Peng, Lin, and Wu}]{sheng2024hybridflow}
Sheng, G.; Zhang, C.; Ye, Z.; Wu, X.; Zhang, W.; Zhang, R.; Peng, Y.; Lin, H.; and Wu, C. 2024.
\newblock HybridFlow: A Flexible and Efficient RLHF Framework.
\newblock \emph{arXiv preprint arXiv: 2409.19256}.

\bibitem[{Stiennon et~al.(2020)Stiennon, Ouyang, Wu, Ziegler, Lowe, Voss, Radford, Amodei, and Christiano}]{stiennon2020learning}
Stiennon, N.; Ouyang, L.; Wu, J.; Ziegler, D.; Lowe, R.; Voss, C.; Radford, A.; Amodei, D.; and Christiano, P.~F. 2020.
\newblock Learning to summarize with human feedback.
\newblock \emph{Advances in neural information processing systems}, 33: 3008--3021.

\bibitem[{Vassoyan, Beau, and Plaud(2025)}]{vassoyan2025ignore}
Vassoyan, J.; Beau, N.; and Plaud, R. 2025.
\newblock Ignore the kl penalty! boosting exploration on critical tokens to enhance rl fine-tuning.
\newblock In \emph{Findings of the Association for Computational Linguistics: NAACL 2025}, 6108--6118.

\bibitem[{Wan et~al.(2026)Wan, Shen, Dou, Zhou, Zhang, Wang, Shen, Xiong, Tao, Zhong et~al.}]{wan2026dsdr}
Wan, Z.; Shen, Y.; Dou, Z.; Zhou, D.; Zhang, Y.; Wang, X.; Shen, H.; Xiong, J.; Tao, C.; Zhong, Z.; et~al. 2026.
\newblock Dsdr: Dual-scale diversity regularization for exploration in llm reasoning.
\newblock \emph{arXiv preprint arXiv:2602.19895}.

\bibitem[{Wang et~al.(2026{\natexlab{a}})Wang, Long, Li, Xu, Li, and Tang}]{wang2026mix}
Wang, H.; Long, X.; Li, Z.; Xu, Y.; Li, T.; and Tang, Y. 2026{\natexlab{a}}.
\newblock To Mix or To Merge: Toward Multi-Domain Reinforcement Learning for Large Language Models.
\newblock \emph{arXiv preprint arXiv:2602.12566}.

\bibitem[{Wang et~al.(2025)Wang, Liu, Zhang, Li, Zhou, and Pan}]{wang2025stabilizing}
Wang, J.; Liu, R.; Zhang, F.; Li, X.; Zhou, G.; and Pan, L. 2025.
\newblock Stabilizing knowledge, promoting reasoning: Dual-token constraints for rlvr.
\newblock \emph{arXiv preprint arXiv:2507.15778}.

\bibitem[{Wang et~al.(2026{\natexlab{b}})Wang, Wang, Lu, Zhang, Wu, Chai, Lin, and Yin}]{wang2026implicit}
Wang, L.; Wang, X.; Lu, X.; Zhang, Z.; Wu, J.; Chai, J.; Lin, W.; and Yin, G. 2026{\natexlab{b}}.
\newblock Implicit Hierarchical GRPO: Decoupling Tool Invocation from Execution for Tool-Integrated Mathematical Reasoning.
\newblock \emph{arXiv preprint arXiv:2605.18500}.

\bibitem[{Xue et~al.(2025)Xue, Zheng, Liu, Li, Zheng, Ma, and An}]{xue2025simpletir}
Xue, Z.; Zheng, L.; Liu, Q.; Li, Y.; Zheng, X.; Ma, Z.; and An, B. 2025.
\newblock Simpletir: End-to-end reinforcement learning for multi-turn tool-integrated reasoning.
\newblock \emph{arXiv preprint arXiv:2509.02479}.

\bibitem[{Yang et~al.(2025)Yang, Li, Yang, Zhang, Hui, Zheng, Yu, Gao, Huang, Lv et~al.}]{yang2025qwen3}
Yang, A.; Li, A.; Yang, B.; Zhang, B.; Hui, B.; Zheng, B.; Yu, B.; Gao, C.; Huang, C.; Lv, C.; et~al. 2025.
\newblock Qwen3 technical report.
\newblock \emph{arXiv preprint arXiv:2505.09388}.

\bibitem[{Yu et~al.(2026)Yu, Zhang, Zhu, Yuan, Zuo, Yue, Dai, Fan, Liu, Liu et~al.}]{yu2026dapo}
Yu, Q.; Zhang, Z.; Zhu, R.; Yuan, Y.; Zuo, X.; Yue, Y.; Dai, W.; Fan, T.; Liu, G.; Liu, L.; et~al. 2026.
\newblock Dapo: An open-source llm reinforcement learning system at scale.
\newblock \emph{Advances in Neural Information Processing Systems}, 38: 113222--113244.

\bibitem[{Zheng et~al.(2025)Zheng, Liu, Li, Chen, Yu, Gao, Dang, Liu, Men, Yang et~al.}]{zheng2025group}
Zheng, C.; Liu, S.; Li, M.; Chen, X.-H.; Yu, B.; Gao, C.; Dang, K.; Liu, Y.; Men, R.; Yang, A.; et~al. 2025.
\newblock Group sequence policy optimization.
\newblock \emph{arXiv preprint arXiv:2507.18071}.

\bibitem[{Ziegler et~al.(2019)Ziegler, Stiennon, Wu, Brown, Radford, Amodei, Christiano, and Irving}]{ziegler2019fine}
Ziegler, D.~M.; Stiennon, N.; Wu, J.; Brown, T.~B.; Radford, A.; Amodei, D.; Christiano, P.; and Irving, G. 2019.
\newblock Fine-tuning language models from human preferences.
\newblock \emph{arXiv preprint arXiv:1909.08593}.

\end{thebibliography}

\appendix
\section{Appendix A: Derivation of the Correctness Decomposition}
\label{app:kl-correctness-decomposition}
For a fixed prompt \(x\), let \(r(x,y)\in\{0,1\}\) indicate whether response \(y\) is verified as correct. For continuous rewards, samples can be partitioned into positive and negative groups using a threshold, allowing the same analysis to apply. Define the correct and incorrect
response sets as
\[
\mathcal{Y}^{+}(x)=\{y:r(x,y)=1\},
\qquad
\mathcal{Y}^{-}(x)=\{y:r(x,y)=0\}.
\]
For any policy \(\pi\), its probability of producing a correct response is
\begin{equation}
Z_{\pi}(x)
=
\sum_y r(x,y)\pi(y\mid x).
\label{eq:app-correctness-probability}
\end{equation}
The corresponding correctness-conditioned and
incorrectness-conditioned policies are
\begin{equation}
\begin{aligned}
\pi^{+}(y\mid x)
&=
\frac{r(x,y)\pi(y\mid x)}
{Z_{\pi}(x)},\\
\pi^{-}(y\mid x)
&=
\frac{[1-r(x,y)]\pi(y\mid x)}
{1-Z_{\pi}(x)}.
\end{aligned}
\label{eq:app-conditioned-policies}
\end{equation}

Let \(\pi_{\mathrm{ref}}\) denote the reference policy and
\(\pi_{\theta}\) the current policy. For readability, we omit the prompt
\(x\) below and write
\[
Z_{\mathrm{ref}}
=
Z_{\pi_{\mathrm{ref}}}(x),
\qquad
Z_{\theta}
=
Z_{\pi_{\theta}}(x).
\]
We assume \(0<Z_{\mathrm{ref}},Z_{\theta}<1\) for notational
convenience. The same decomposition extends to boundary cases under the
standard extended-value convention for KL divergence. The full-policy forward KL can be partitioned over the correct and
incorrect response sets:
\begin{equation}
\begin{aligned}
D_{\mathrm{KL}}
\left(
\pi_{\mathrm{ref}}
\|
\pi_{\theta}
\right)
={}&
\sum_{y\in\mathcal{Y}^{+}}
\pi_{\mathrm{ref}}(y)
\log
\frac{\pi_{\mathrm{ref}}(y)}
{\pi_{\theta}(y)}
\\
&+
\sum_{y\in\mathcal{Y}^{-}}
\pi_{\mathrm{ref}}(y)
\log
\frac{\pi_{\mathrm{ref}}(y)}
{\pi_{\theta}(y)}.
\end{aligned}
\label{eq:app-kl-partition}
\end{equation}

For \(y\in\mathcal{Y}^{+}\), we have
\[
\pi_{\mathrm{ref}}(y)
=
Z_{\mathrm{ref}}\pi_{\mathrm{ref}}^{+}(y),
\qquad
\pi_{\theta}(y)
=
Z_{\theta}\pi_{\theta}^{+}(y).
\]
Therefore, the contribution from the correct-response set satisfies
\begin{equation}
\begin{aligned}
&
\sum_{y\in\mathcal{Y}^{+}}
\pi_{\mathrm{ref}}(y)
\log
\frac{\pi_{\mathrm{ref}}(y)}
{\pi_{\theta}(y)}
\\
={}&
Z_{\mathrm{ref}}
\sum_{y\in\mathcal{Y}^{+}}
\pi_{\mathrm{ref}}^{+}(y)
\log
\frac{
Z_{\mathrm{ref}}\pi_{\mathrm{ref}}^{+}(y)
}{
Z_{\theta}\pi_{\theta}^{+}(y)
}
\\
={}&
Z_{\mathrm{ref}}
\log
\frac{Z_{\mathrm{ref}}}{Z_{\theta}}
+
Z_{\mathrm{ref}}
D_{\mathrm{KL}}
\left(
\pi_{\mathrm{ref}}^{+}
\|
\pi_{\theta}^{+}
\right).
\end{aligned}
\label{eq:app-positive-part}
\end{equation}
Here, the last equality uses
\[
\sum_{y\in\mathcal{Y}^{+}}
\pi_{\mathrm{ref}}^{+}(y)=1.
\]

Similarly, for \(y\in\mathcal{Y}^{-}\),
\[
\pi_{\mathrm{ref}}(y)
=
(1-Z_{\mathrm{ref}})
\pi_{\mathrm{ref}}^{-}(y),
\qquad
\pi_{\theta}(y)
=
(1-Z_{\theta})
\pi_{\theta}^{-}(y).
\]
The contribution from the incorrect-response set is
\begin{equation}
\begin{aligned}
&
\sum_{y\in\mathcal{Y}^{-}}
\pi_{\mathrm{ref}}(y)
\log
\frac{\pi_{\mathrm{ref}}(y)}
{\pi_{\theta}(y)}
\\
={}&
(1-Z_{\mathrm{ref}})
\log
\frac{1-Z_{\mathrm{ref}}}
{1-Z_{\theta}}
\\
&+
(1-Z_{\mathrm{ref}})
D_{\mathrm{KL}}
\left(
\pi_{\mathrm{ref}}^{-}
\|
\pi_{\theta}^{-}
\right).
\end{aligned}
\label{eq:app-negative-part}
\end{equation}

Combining Eqs.~\ref{eq:app-positive-part}
and~\ref{eq:app-negative-part} gives
\begin{equation}
\begin{aligned}
D_{\mathrm{KL}}
\left(
\pi_{\mathrm{ref}}
\|
\pi_{\theta}
\right)
={}&
Z_{\mathrm{ref}}
\log
\frac{Z_{\mathrm{ref}}}{Z_{\theta}}
+
(1-Z_{\mathrm{ref}})
\log
\frac{1-Z_{\mathrm{ref}}}
{1-Z_{\theta}}
\\
&+
Z_{\mathrm{ref}}
D_{\mathrm{KL}}
\left(
\pi_{\mathrm{ref}}^{+}
\|
\pi_{\theta}^{+}
\right)
\\
&+
(1-Z_{\mathrm{ref}})
D_{\mathrm{KL}}
\left(
\pi_{\mathrm{ref}}^{-}
\|
\pi_{\theta}^{-}
\right).
\end{aligned}
\label{eq:app-expanded-decomposition}
\end{equation}

The first two terms form the KL divergence between the Bernoulli
distributions induced by the correctness event:
\begin{equation}
\begin{aligned}
&
D_{\mathrm{KL}}
\left(
\operatorname{Bern}(Z_{\mathrm{ref}})
\|
\operatorname{Bern}(Z_{\theta})
\right)
\\
={}&
Z_{\mathrm{ref}}
\log
\frac{Z_{\mathrm{ref}}}{Z_{\theta}}
+
(1-Z_{\mathrm{ref}})
\log
\frac{1-Z_{\mathrm{ref}}}
{1-Z_{\theta}}.
\end{aligned}
\label{eq:app-bernoulli-kl}
\end{equation}
Hence, the full-policy forward KL admits the decomposition
\begin{equation}
\begin{aligned}
D_{\mathrm{KL}}
\left(
\pi_{\mathrm{ref}}
\|
\pi_{\theta}
\right)
={}&
D_{\mathrm{KL}}
\left(
\operatorname{Bern}(Z_{\mathrm{ref}})
\|
\operatorname{Bern}(Z_{\theta})
\right)
\\
&+
Z_{\mathrm{ref}}
D_{\mathrm{KL}}
\left(
\pi_{\mathrm{ref}}^{+}
\|
\pi_{\theta}^{+}
\right)
\\
&+
(1-Z_{\mathrm{ref}})
D_{\mathrm{KL}}
\left(
\pi_{\mathrm{ref}}^{-}
\|
\pi_{\theta}^{-}
\right).
\end{aligned}
\label{eq:app-final-decomposition}
\end{equation}

\section{Appendix B: Complete Derivation of CoKL}
\label{app:cokl-full-derivation}

\subsection{Setup}

For each prompt \(x\), let
\(\pi_{\mathrm{ref}}(y\mid x)\) denote the frozen reference policy and
\(\pi_\theta(y\mid x)\) denote the current policy. Let \(Y=1\) denote
the event that response \(y\) is verified as correct for prompt \(x\). We define $
r(x,y)
=
\mathbf{1}
\bigl[
y\text{ is correct}
\bigr]\in\{0,1\}$. The correctness probabilities of the reference and current policies are
\begin{equation}
Z_{\mathrm{ref}}(x)
=
\sum_y
r(x,y)\pi_{\mathrm{ref}}(y\mid x),
\label{eq:appb-z-ref}
\end{equation}
and
\begin{equation}
Z_\theta(x)
=
\sum_y
r(x,y)\pi_\theta(y\mid x).
\label{eq:appb-z-theta}
\end{equation}

The corresponding correctness-conditioned policies are
\begin{equation}
\pi_{\mathrm{ref}}^+(y\mid x)
=
\frac{
r(x,y)\pi_{\mathrm{ref}}(y\mid x)
}{
Z_{\mathrm{ref}}(x)
},
\label{eq:appb-ref-positive}
\end{equation}
and
\begin{equation}
\pi_\theta^+(y\mid x)
=
\frac{
r(x,y)\pi_\theta(y\mid x)
}{
Z_\theta(x)
}.
\label{eq:appb-theta-positive}
\end{equation}
We assume that \(Z_{\mathrm{ref}}(x)>0\) and \(Z_\theta(x)>0\), so that
the correctness-conditioned policies are well-defined.

\subsection{CoKL Objective and Equivalent Form}

CoKL is defined as the forward KL divergence between the
correctness-conditioned reference and current policies:
\begin{equation}
\mathcal{L}_{\mathrm{CoKL}}(x)
=
D_{\mathrm{KL}}
\left(
\pi_{\mathrm{ref}}^+(\cdot\mid x)
\|
\pi_\theta^+(\cdot\mid x)
\right).
\label{eq:appb-cokl-definition}
\end{equation}
Expanding the KL divergence gives
\begin{equation}
\begin{aligned}
\mathcal{L}_{\mathrm{CoKL}}(x)
={}&
\sum_y
\pi_{\mathrm{ref}}^+(y\mid x)
\log \pi_{\mathrm{ref}}^+(y\mid x)
\\
&-
\sum_y
\pi_{\mathrm{ref}}^+(y\mid x)
\log \pi_\theta^+(y\mid x).
\end{aligned}
\label{eq:appb-cokl-expansion}
\end{equation}

The first term in
Eq.~\ref{eq:appb-cokl-expansion} is independent of \(\theta\). Moreover,
every response in the support of
\(\pi_{\mathrm{ref}}^+(\cdot\mid x)\) satisfies \(r(x,y)=1\). Therefore,
using Eq.~\ref{eq:appb-theta-positive},
\begin{equation}
\log\pi_\theta^+(y\mid x)
=
\log\pi_\theta(y\mid x)
-
\log Z_\theta(x)
\label{eq:appb-conditioned-logprob}
\end{equation}
on the support of \(\pi_{\mathrm{ref}}^+(\cdot\mid x)\). Substituting Eq.~\ref{eq:appb-conditioned-logprob} into
Eq.~\ref{eq:appb-cokl-expansion} yields, up to terms independent of
\(\theta\),
\begin{equation}
\begin{aligned}
\mathcal{L}_{\mathrm{CoKL}}(x)
\doteq{}&
-
\mathbb{E}_{
y\sim\pi_{\mathrm{ref}}^+(\cdot\mid x)
}
\left[
\log\pi_\theta(y\mid x)
\right]
+
\log Z_\theta(x),
\end{aligned}
\label{eq:appb-cokl-equivalent}
\end{equation}
where \(\doteq\) denotes equality up to terms independent of
\(\theta\).

\subsection{Gradient of the CoKL Objective}

Differentiating Eq.~\ref{eq:appb-cokl-equivalent}, the first term gives
\begin{equation}
-
\mathbb{E}_{
y\sim\pi_{\mathrm{ref}}^+(\cdot\mid x)
}
\left[
\nabla_\theta\log\pi_\theta(y\mid x)
\right].
\label{eq:appb-reference-gradient}
\end{equation}

For the normalization term,
\begin{equation}
\nabla_\theta\log Z_\theta(x)
=
\frac{
\nabla_\theta Z_\theta(x)
}{
Z_\theta(x)
}.
\label{eq:appb-logz-gradient}
\end{equation}
From Eq.~\ref{eq:appb-z-theta},
\begin{equation}
\nabla_\theta Z_\theta(x)
=
\sum_y
r(x,y)
\nabla_\theta\pi_\theta(y\mid x).
\label{eq:appb-z-gradient}
\end{equation}
Using
\[
\nabla_\theta\pi_\theta(y\mid x)
=
\pi_\theta(y\mid x)
\nabla_\theta\log\pi_\theta(y\mid x),
\]
we obtain
\begin{equation}
\begin{aligned}
\nabla_\theta\log Z_\theta(x)
={}&
\sum_y
\frac{
r(x,y)\pi_\theta(y\mid x)
}{
Z_\theta(x)
}
\\
&\qquad\cdot
\nabla_\theta\log\pi_\theta(y\mid x).
\end{aligned}
\label{eq:appb-logz-expanded}
\end{equation}
By Eq.~\ref{eq:appb-theta-positive}, this becomes
\begin{equation}
\nabla_\theta\log Z_\theta(x)
=
\mathbb{E}_{
y\sim\pi_\theta^+(\cdot\mid x)
}
\left[
\nabla_\theta\log\pi_\theta(y\mid x)
\right].
\label{eq:appb-logz-expectation}
\end{equation}

Combining Eqs.~\ref{eq:appb-reference-gradient}
and~\ref{eq:appb-logz-expectation}, the CoKL gradient is
\begin{equation}
\begin{aligned}
\nabla_\theta
\mathcal{L}_{\mathrm{CoKL}}(x)
={}&
-
\mathbb{E}_{
y\sim\pi_{\mathrm{ref}}^+(\cdot\mid x)
}
\left[
\nabla_\theta\log\pi_\theta(y\mid x)
\right]
\\
&+
\mathbb{E}_{
y\sim\pi_\theta^+(\cdot\mid x)
}
\left[
\nabla_\theta\log\pi_\theta(y\mid x)
\right].
\end{aligned}
\label{eq:appb-cokl-gradient}
\end{equation}

\subsection{On-Policy Monte Carlo Surrogate}

Consider a minibatch containing \(N\) prompt groups
\(\{x_i\}_{i=1}^N\). For each prompt \(x_i\), draw \(G\) responses from
the frozen reference policy:
\begin{equation}
y_{ij}
\sim
\pi_{\mathrm{ref}}(\cdot\mid x_i),
\qquad
r_{ij}
=
r(x_i,y_{ij}).
\label{eq:appb-reference-samples}
\end{equation}
The empirical reference-side conditional weights are
\begin{equation}
\alpha_{ij}
=
\frac{
r_{ij}
}{
\sum_{k=1}^G r_{ik}
}.
\label{eq:appb-alpha}
\end{equation}
When \(\sum_{k=1}^G r_{ik}=0\), we set
\(\alpha_{ij}=0\) for all responses in that group. For any function \(f\), the reference-side conditional expectation can
be approximated by the self-normalized estimator
\begin{equation}
\mathbb{E}_{
y\sim\pi_{\mathrm{ref}}^+(\cdot\mid x_i)
}
[f(y)]
\approx
\sum_{j=1}^G
\alpha_{ij}f(y_{ij}).
\label{eq:appb-reference-mc}
\end{equation}

Similarly, draw \(G\) fresh responses from the current policy:
\begin{equation}
y'_{ij}
\sim
\pi_\theta(\cdot\mid x_i),
\qquad
r'_{ij}
=
r(x_i,y'_{ij}).
\label{eq:appb-current-samples}
\end{equation}
Define the empirical current-policy conditional weights as
\begin{equation}
\delta_{ij}
=
\frac{
r'_{ij}
}{
\sum_{k=1}^G r'_{ik}
}.
\label{eq:appb-delta}
\end{equation}
When \(\sum_{k=1}^G r'_{ik}=0\), we set
\(\delta_{ij}=0\) for all responses in that group. The current-policy conditional expectation is then approximated by
\begin{equation}
\mathbb{E}_{
y\sim\pi_\theta^+(\cdot\mid x_i)
}
[f(y)]
\approx
\sum_{j=1}^G
\delta_{ij}f(y'_{ij}).
\label{eq:appb-current-mc}
\end{equation}

A stop-gradient surrogate whose gradient implements the finite-group
approximation of Eq.~\ref{eq:appb-cokl-gradient} is
\begin{equation}
\begin{aligned}
\widehat{\mathcal{L}}_{\mathrm{on}}
={}&
-
\frac{1}{N}
\sum_{i=1}^N
\sum_{j=1}^G
\operatorname{sg}(\alpha_{ij})
\log\pi_\theta(y_{ij}\mid x_i)
\\
&+
\frac{1}{N}
\sum_{i=1}^N
\sum_{j=1}^G
\operatorname{sg}(\delta_{ij})
\log\pi_\theta(y'_{ij}\mid x_i),
\end{aligned}
\label{eq:appb-on-policy-loss}
\end{equation}
where \(\operatorname{sg}(\cdot)\) denotes stop-gradient. The second
term is a gradient surrogate for \(\log Z_\theta(x)\), rather than an
unbiased scalar estimator of \(\log Z_\theta(x)\).

The zero-denominator convention above makes the
current-policy conditional term vanish when no
verified-correct response is observed in a rollout group.
We next characterize the resulting expected finite-group
update in the on-policy setting.
\paragraph{Finite-Group Correctness-Recovery Property.}
For a fixed prompt \(x\), let
\[
s_\theta(y\mid x)
=
\nabla_\theta \log \pi_\theta(y\mid x),
\]
and define
\[
\mu_{\mathrm{ref}}^{+}(x)
=
\mathbb{E}_{y\sim\pi_{\mathrm{ref}}^{+}(\cdot\mid x)}
\left[
s_\theta(y\mid x)
\right],
\]
\[
\mu_{\theta}^{+}(x)
=
\mathbb{E}_{y\sim\pi_{\theta}^{+}(\cdot\mid x)}
\left[
s_\theta(y\mid x)
\right]
=
\nabla_\theta\log Z_\theta(x).
\]
Consider the on-policy finite-group surrogate in
Eq.~\ref{eq:appb-on-policy-loss}, where the reference group is conditioned on
containing at least one verified-correct response, as in the
construction of \(\mathcal{D}^{+}\). Let
\[
K_\theta
=
\sum_{j=1}^{G} r(x,y'_j),
\qquad
y'_j\sim\pi_\theta(\cdot\mid x).
\]
Then the expected gradient of the per-prompt surrogate
satisfies
\begin{equation}
\begin{aligned}
\mathbb{E}
\left[
\nabla_\theta
\widehat{\mathcal{L}}_{\mathrm{on}}(x)
\right]
={}&
-\mu_{\mathrm{ref}}^{+}(x)
+
\left[
1-(1-Z_\theta(x))^{G}
\right]
\mu_{\theta}^{+}(x).
\end{aligned}
\label{eq:appb-finite-gradient}
\end{equation}
Equivalently,
\begin{equation}
\begin{aligned}
\mathbb{E}
\left[
\nabla_\theta
\widehat{\mathcal{L}}_{\mathrm{on}}(x)
\right]
={}&
\nabla_\theta\mathcal{L}_{\mathrm{CoKL}}(x)
-
(1-Z_\theta(x))^{G}
\nabla_\theta\log Z_\theta(x).
\end{aligned}
\label{eq:appb-finite-gradient-decomposition}
\end{equation}
Moreover, define
\begin{equation}
\begin{aligned}
R_G(Z)
&=
\sum_{m=G+1}^{\infty}
\frac{(1-Z)^m}{m}
\\
&=
-\log Z
-
\sum_{m=1}^{G}
\frac{(1-Z)^m}{m},
\qquad 0<Z\leq 1.
\end{aligned}
\label{eq:appb-recovery-term}
\end{equation}
Since
\begin{equation}
R_G'(Z)
=
-\frac{(1-Z)^G}{Z},
\label{eq:appb-recovery-derivative}
\end{equation}
Eq.~\ref{eq:appb-finite-gradient-decomposition} can be written as
\begin{equation}
\begin{aligned}
\mathbb{E}
\left[
\nabla_\theta
\widehat{\mathcal{L}}_{\mathrm{on}}(x)
\right]
=
\nabla_\theta
\left[
\mathcal{L}_{\mathrm{CoKL}}(x)
+
R_G(Z_\theta(x))
\right].
\end{aligned}
\label{eq:appb-recovery-gradient}
\end{equation}
The recovery term satisfies
\begin{equation}
R_G(Z)\geq 0,
\qquad
R_G'(Z)<0
\quad\text{for }0<Z<1,
\label{eq:appb-recovery-properties}
\end{equation}
and
\begin{equation}
\lim_{Z\rightarrow 1}R_G(Z)=0,
\qquad
\lim_{G\rightarrow\infty}R_G(Z)=0
\quad\text{for every fixed }Z>0.
\label{eq:appb-recovery-limits}
\end{equation}

\paragraph{Proof.}
Conditioned on \(K_\theta=k>0\), the \(k\)
verified-correct responses are distributed according to
\(\pi_\theta^{+}(\cdot\mid x)\). Therefore,
\begin{equation}
\mathbb{E}
\left[
\sum_{j=1}^{G}
\delta_j s_\theta(y'_j\mid x)
\;\middle|\;
K_\theta>0
\right]
=
\mu_\theta^{+}(x).
\label{eq:appb-conditional-current-score}
\end{equation}
When \(K_\theta=0\), all current-policy conditional weights
are set to zero. Since
\[
K_\theta
\sim
\operatorname{Binomial}(G,Z_\theta(x)),
\]
we have
\begin{equation}
\Pr(K_\theta>0)
=
1-(1-Z_\theta(x))^G.
\label{eq:appb-positive-count-probability}
\end{equation}
Consequently,
\begin{equation}
\begin{aligned}
\mathbb{E}
\left[
\sum_{j=1}^{G}
\delta_j s_\theta(y'_j\mid x)
\right]
=
\left[
1-(1-Z_\theta(x))^G
\right]
\mu_\theta^{+}(x).
\end{aligned}
\label{eq:appb-expected-current-score}
\end{equation}

For the reference-side term, conditioning the reference
group on containing at least one verified-correct response
gives
\begin{equation}
\mathbb{E}
\left[
\sum_{j=1}^{G}
\alpha_j s_\theta(y_j\mid x)
\;\middle|\;
K_{\mathrm{ref}}>0
\right]
=
\mu_{\mathrm{ref}}^{+}(x).
\label{eq:appb-expected-reference-score}
\end{equation}
Combining Eqs.~\ref{eq:appb-expected-current-score}
and~\ref{eq:appb-expected-reference-score} yields
Eq.~\ref{eq:appb-finite-gradient}. Using
\[
\nabla_\theta
\mathcal{L}_{\mathrm{CoKL}}(x)
=
-\mu_{\mathrm{ref}}^{+}(x)
+
\mu_\theta^{+}(x)
\]
gives Eq.~\ref{eq:appb-finite-gradient-decomposition}. Finally, differentiating Eq.~\ref{eq:appb-recovery-term} yields
\[
R_G'(Z)
=
-
\sum_{m=G+1}^{\infty}(1-Z)^{m-1}
=
-\frac{(1-Z)^G}{Z}.
\]
Therefore,
\[
\nabla_\theta R_G(Z_\theta(x))
=
-(1-Z_\theta(x))^G
\nabla_\theta\log Z_\theta(x),
\]
which proves Eq.~\ref{eq:appb-recovery-gradient}. The remaining
properties follow directly from the nonnegative series representation in
Eq.~\ref{eq:appb-recovery-term}.
\hfill\(\square\)

The finite-group recovery term does not anchor the current
correctness probability to \(Z_{\mathrm{ref}}(x)\). Instead,
because \(R_G'(Z)<0\), minimizing this term provides a
one-sided pressure toward higher correctness. To make this
distinction explicit, consider the expected finite-group
population objective
\begin{equation}
J_G(\pi)
=
Z_\pi
-
\beta
\left[
D_{\mathrm{KL}}
\left(
\pi_{\mathrm{ref}}^{+}
\|
\pi^{+}
\right)
+
R_G(Z_\pi)
\right].
\label{eq:appb-finite-population-objective}
\end{equation}
For fixed \(\pi^{+}\),
\begin{equation}
\frac{\partial J_G}{\partial Z_\pi}
=
1
+
\beta
\frac{(1-Z_\pi)^G}{Z_\pi}
>
0.
\label{eq:appb-finite-objective-derivative}
\end{equation}
Thus, increasing the total correctness probability strictly
improves the objective. Under the same unrestricted
policy-distribution setting used in Appendix~C, every global
maximizer therefore satisfies
\begin{equation}
Z_{\pi^\star}=1,
\qquad
\pi^{+\star}=\pi_{\mathrm{ref}}^{+}.
\label{eq:appb-finite-optimum}
\end{equation}
Hence, the finite-group fallback does not reintroduce the
strict correctness gap induced by full-policy KL regularization.

\subsection{Off-Policy Estimation}
For completeness, we derive an off-policy extension for settings
where responses generated by a behavior policy are reused. Suppose that the current-policy conditional term is instead estimated
using responses generated by a behavior policy
\(\pi_{\mathrm{old}}\):
\begin{equation}
\hat y_{ij}
\sim
\pi_{\mathrm{old}}(\cdot\mid x_i),
\qquad
\hat r_{ij}
=
r(x_i,\hat y_{ij}).
\label{eq:appb-old-samples}
\end{equation}
We assume that the support of the current policy is contained in the
support of the behavior policy for the responses considered below. For a complete response \(\hat y_{ij}\) of length \(T_{ij}\), define
its sequence log-likelihood under the current policy as
\begin{equation}
\begin{aligned}
\ell_\theta(\hat y_{ij}\mid x_i)
=
\sum_{t=1}^{T_{ij}}
\log\pi_\theta
\bigl(
\hat y_{ij,t}
\mid
x_i,\hat y_{ij,<t}
\bigr).
\end{aligned}
\label{eq:appb-sequence-logprob}
\end{equation}
The behavior-policy log-likelihood
\(\ell_{\mathrm{old}}(\hat y_{ij}\mid x_i)\) is defined analogously.
The sequence-level log-importance ratio is
\begin{equation}
\Delta_{ij}(\theta)
=
\ell_\theta(\hat y_{ij}\mid x_i)
-
\ell_{\mathrm{old}}(\hat y_{ij}\mid x_i).
\label{eq:appb-log-ratio}
\end{equation}
Hence, the sequence-level importance ratio is
\begin{equation}
\begin{aligned}
\rho_{ij}(\theta)
&=
\exp\bigl(\Delta_{ij}(\theta)\bigr)
=
\frac{
\pi_\theta(\hat y_{ij}\mid x_i)
}{
\pi_{\mathrm{old}}(\hat y_{ij}\mid x_i)
}.
\end{aligned}
\label{eq:appb-importance-ratio}
\end{equation}

For compactness, let
\(\mathbb{E}_{\mathrm{old},i}[\cdot]\) denote expectation under
\(\pi_{\mathrm{old}}(\cdot\mid x_i)\). The current correctness
probability can be expressed as
\begin{equation}
Z_\theta(x_i)
=
\mathbb{E}_{\mathrm{old},i}
\left[
\rho(\hat y;\theta)
r(x_i,\hat y)
\right].
\label{eq:appb-is-z}
\end{equation}
Its log-gradient is therefore
\begin{equation}
\begin{aligned}
\nabla_\theta\log Z_\theta(x_i)
=
\frac{
\mathbb{E}_{\mathrm{old},i}
\left[
\rho(\hat y;\theta)
r(x_i,\hat y)
\nabla_\theta\log\pi_\theta(\hat y\mid x_i)
\right]
}{
\mathbb{E}_{\mathrm{old},i}
\left[
\rho(\hat y;\theta)
r(x_i,\hat y)
\right]
}.
\end{aligned}
\label{eq:appb-is-logz-gradient}
\end{equation}
Using \(G\) behavior-policy samples, this gradient is approximated by
\begin{equation}
\begin{aligned}
\nabla_\theta\log Z_\theta(x_i)
\approx
\sum_{j=1}^G
\frac{
\rho_{ij}(\theta)\hat r_{ij}
}{
\sum_{k=1}^G
\rho_{ik}(\theta)\hat r_{ik}
}
\nabla_\theta
\log\pi_\theta(\hat y_{ij}\mid x_i).
\end{aligned}
\label{eq:appb-off-policy-gradient}
\end{equation}

\subsection{Clipped Off-Policy CoKL Surrogate}

Sequence-level importance ratios may have high variance. We therefore
define the clipped importance weight
\begin{equation}
\begin{aligned}
\tilde\rho_{ij}
=
\exp
\Bigl[
\operatorname{clip}
\bigl(
\Delta_{ij}(\theta),
\log(1-\epsilon_{\mathrm{IS}}),
\log(1+\epsilon_{\mathrm{IS}})
\bigr)
\Bigr].
\end{aligned}
\label{eq:appb-clipped-ratio}
\end{equation}
The corresponding self-normalized current-policy conditional weights are
\begin{equation}
\gamma_{ij}
=
\frac{
\tilde\rho_{ij}\hat r_{ij}
}{
\sum_{k=1}^G
\tilde\rho_{ik}\hat r_{ik}
}.
\label{eq:appb-gamma}
\end{equation}
When the denominator in Eq.~\ref{eq:appb-gamma} is zero, we set
\(\gamma_{ij}=0\) for all responses in that group. Since
the clipped importance weights are strictly positive, this
case occurs exactly when the behavior-policy rollout group
contains no verified-correct response. The current-policy
conditional correction is then unavailable, and the
surrogate falls back to the reference-correct term. The clipped off-policy CoKL surrogate is
\begin{equation}
\begin{aligned}
\widehat{\mathcal{L}}_{\mathrm{CoKL}}
={}&
-
\frac{1}{N}
\sum_{i=1}^N
\sum_{j=1}^G
\operatorname{sg}(\alpha_{ij})
\log\pi_\theta(y_{ij}\mid x_i)
\\
&+
\frac{1}{N}
\sum_{i=1}^N
\sum_{j=1}^G
\operatorname{sg}(\gamma_{ij})
\log\pi_\theta(\hat y_{ij}\mid x_i).
\end{aligned}
\label{eq:appb-practical-loss}
\end{equation}

Unlike the on-policy surrogate analyzed above, the clipped off-policy
surrogate is additionally affected by finite-sample self-normalization,
policy mismatch, and importance-ratio clipping, and therefore does not
admit the same exact recovery-term characterization. At the point of
sample collection, \(\theta=\theta_{\mathrm{old}}\) and
\(\tilde{\rho}_{ij}=1\), so it reduces to the on-policy surrogate.
During subsequent optimization epochs, the clipped importance weights
approximately correct for policy mismatch.

\section{Appendix C: Proof of Correctness-Mass Decoupling of CoKL}
\begin{theorem}[Correctness-Mass Decoupling under CoKL]
\label{thm:cokl_decoupling}
For $\beta>0$, consider the population-level objective
\[
J_{\mathrm{CoKL}}(\pi)
=
Z_\pi
-
\beta D_{\mathrm{KL}}
\left(
\pi_{\mathrm{ref}}^+
\Vert
\pi^+
\right).
\]
The objective depends on $\pi$ only through $Z_\pi$ and
$\pi^+$ and is independent of $\pi^-$. For any two policies
$\pi_1$ and $\pi_2$ satisfying
$\pi_1^+=\pi_2^+$ and $Z_{\pi_2}>Z_{\pi_1}$,
\[
J_{\mathrm{CoKL}}(\pi_2)
-
J_{\mathrm{CoKL}}(\pi_1)
=
Z_{\pi_2}-Z_{\pi_1}
>
0.
\]
Moreover, over the full probability simplex,
$\max_\pi J_{\mathrm{CoKL}}(\pi)=1$, and every global
maximizer satisfies
\[
Z_{\pi^\star}=1,
\qquad
\pi^{+\star}=\pi_{\mathrm{ref}}^+.
\]
\end{theorem}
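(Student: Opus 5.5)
The plan is to prove the three claims in the order stated, each resting on the fact that $\pi \mapsto (Z_\pi, \pi^+, \pi^-)$ is a reparameterization of the simplex. Concretely, the decomposition $\pi(y) = Z_\pi\,\pi^+(y)$ for $y\in\mathcal{Y}^+$ and $\pi(y) = (1-Z_\pi)\,\pi^-(y)$ for $y\in\mathcal{Y}^-$, already used in Appendix~A, gives a bijection. Its domain is the set of policies with $0<Z_\pi<1$, and its target is $(0,1)\times\Delta(\mathcal{Y}^+)\times\Delta(\mathcal{Y}^-)$. The boundary cases attach naturally: if $Z_\pi=1$, then $\pi^-$ is undefined and irrelevant. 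If $Z_\pi=0$, then $\pi^+$ is undefined, and we adopt the extended-value convention that the KL term is $+\infty$, so $J_{\mathrm{CoKL}}=-\infty$, since $Z_{\mathrm{ref}}>0$ is assumed throughout.

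\textbf{Dependence and monotonicity.} By definition, $J_{\mathrm{CoKL}}(\pi)=Z_\pi-\beta D_{\mathrm{KL}}(\pi_{\mathrm{ref}}^+\Vert\pi^+)$. The first term is a function of $Z_\pi$ alone. The second involves only $\pi^+$, because $\pi_{\mathrm{ref}}^+$ is fixed. Hence $\pi^-$ never enters. Now take $\pi_1,\pi_2$ with $\pi_1^+=\pi_2^+$. The penalty terms are then identical: both are finite or both are $+\infty$. In the finite case they cancel, leaving $J_{\mathrm{CoKL}}(\pi_2)-J_{\mathrm{CoKL}}(\pi_1)=Z_{\pi_2}-Z_{\pi_1}>0$. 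I would state explicitly that the difference is taken only when the penalty is finite, i.e.\ when $\mathrm{supp}(\pi_{\mathrm{ref}}^+)\subseteq\mathrm{supp}(\pi_1^+)$; otherwise both sides are $-\infty$ and the identity is vacuous.

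\textbf{Global maximum.} For every $\pi$ we have $Z_\pi\le 1$, and $D_{\mathrm{KL}}\ge 0$ by Gibbs' inequality, so $J_{\mathrm{CoKL}}(\pi)\le 1$. The bound is attained by $\pi^\star:=\pi_{\mathrm{ref}}^+$ viewed as a policy on the whole response space. This choice has $Z_{\pi^\star}=1$ and $(\pi^\star)^+=\pi_{\mathrm{ref}}^+$, so the KL term vanishes and $J_{\mathrm{CoKL}}(\pi^\star)=1$. Thus $\max_\pi J_{\mathrm{CoKL}}=1$. Conversely, suppose $J_{\mathrm{CoKL}}(\pi)=1$. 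Since $\beta>0$, both inequalities $Z_\pi\le1$ and $\beta D_{\mathrm{KL}}\ge0$ must be tight. This forces $Z_\pi=1$ and $D_{\mathrm{KL}}(\pi_{\mathrm{ref}}^+\Vert\pi^+)=0$. The equality case of Gibbs' inequality then gives $\pi^+=\pi_{\mathrm{ref}}^+$.

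\textbf{Main obstacle.} The only delicate step is the boundary handling. One must verify three things: that $\pi^+$ is well defined at $Z_\pi=1$ (it is, and equals $\pi$), that policies with $Z_\pi=0$ or with unmatched support are correctly assigned the value $-\infty$, and that the maximum is genuinely attained rather than approached only as a supremum. I would handle all three by fixing the extended-value convention for KL at the outset. With that convention, the argument reduces to the two elementary inequalities $Z_\pi\le 1$ and $D_{\mathrm{KL}}\ge 0$, together with their equality cases.
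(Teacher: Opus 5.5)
Your proposal is correct and follows essentially the same route as the paper's Appendix C proof: the $(Z_\pi,\pi^+,\pi^-)$ reparameterization, cancellation of the identical penalty terms when $\pi_1^+=\pi_2^+$, and the bound $J_{\mathrm{CoKL}}\le Z_\pi\le 1$ attained by $\pi_{\mathrm{ref}}^+$ extended by zero, with equality forcing $Z_{\pi^\star}=1$ and $\pi^{+\star}=\pi_{\mathrm{ref}}^+$. Your handling of the boundary cases is more careful than the paper's, which only compares $0<z_1<z_2<1$ with $p^+,p^-$ held fixed and never discusses $Z_\pi=0$ or an infinite penalty; you cover $Z_\pi=0$, the infinite-KL case, and $Z_{\pi_2}=1$ explicitly.
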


\begin{proof}
We first show that $(z,p^+,p^-)$ provides a valid
parameterization of the unrestricted policy-distribution
space. For any policy $\pi$ satisfying $0<Z_\pi<1$, define
\[
z=Z_\pi
=
\sum_{y\in\mathcal{C}}\pi(y),
\]
and
\[
p^+(y)
=
\frac{\pi(y)}{z},
\quad y\in\mathcal{C},
\qquad
p^-(y)
=
\frac{\pi(y)}{1-z},
\quad y\in\mathcal{I}.
\]
These are valid conditional distributions because
\[
\sum_{y\in\mathcal{C}}p^+(y)=1,
\qquad
\sum_{y\in\mathcal{I}}p^-(y)=1.
\]
Moreover, the original policy can be recovered from the
triple $(z,p^+,p^-)$ as
\[
\pi(y)
=
\begin{cases}
z p^+(y), & y\in\mathcal{C},\\[1mm]
(1-z)p^-(y), & y\in\mathcal{I}.
\end{cases}
\]

Conversely, let $z\in(0,1)$, let $p^+$ be any probability
distribution on $\mathcal{C}$, and let $p^-$ be any
probability distribution on $\mathcal{I}$. Define
\[
\pi_{z,p^+,p^-}(y)
=
\begin{cases}
z p^+(y), & y\in\mathcal{C},\\[1mm]
(1-z)p^-(y), & y\in\mathcal{I}.
\end{cases}
\]
This construction yields a valid policy since
\[
\begin{aligned}
\sum_{y\in\mathcal{Y}}
\pi_{z,p^+,p^-}(y)
&=
z\sum_{y\in\mathcal{C}}p^+(y)
+
(1-z)\sum_{y\in\mathcal{I}}p^-(y)
\\
&=
z+(1-z)
\\
&=1.
\end{aligned}
\]
It also satisfies
\[
Z_{\pi_{z,p^+,p^-}}=z,
\qquad
\pi_{z,p^+,p^-}^+=p^+,
\qquad
\pi_{z,p^+,p^-}^-=p^-.
\]
Therefore, in the unrestricted policy-distribution space,
$z$, $p^+$, and $p^-$ can be varied separately through this
bijective parameterization. Using $Z_\pi=z$ and $\pi^+=p^+$, the CoKL objective becomes
\[
\begin{aligned}
J_{\mathrm{CoKL}}(z,p^+,p^-)
&=
Z_\pi
-
\beta D_{\mathrm{KL}}
\left(
\pi_{\mathrm{ref}}^+
\Vert
\pi^+
\right)
\\
&=
z
-
\beta D_{\mathrm{KL}}
\left(
\pi_{\mathrm{ref}}^+
\Vert
p^+
\right).
\end{aligned}
\]
The objective is therefore independent of $p^-$. Moreover,
for fixed $p^+$ and $p^-$ and any $0<z_1<z_2<1$,
\[
\begin{aligned}
&
J_{\mathrm{CoKL}}(z_2,p^+,p^-)
-
J_{\mathrm{CoKL}}(z_1,p^+,p^-)
\\
&=
\left[
z_2
-
\beta D_{\mathrm{KL}}
\left(
\pi_{\mathrm{ref}}^+
\Vert
p^+
\right)
\right]
\\
&\quad-
\left[
z_1
-
\beta D_{\mathrm{KL}}
\left(
\pi_{\mathrm{ref}}^+
\Vert
p^+
\right)
\right]
\\
&=
z_2-z_1
\\
&>0.
\end{aligned}
\]
Thus, increasing the total probability assigned to correct
responses while preserving the conditional distributions
strictly improves the objective without changing the CoKL
penalty. It remains to characterize the global optimum. For every
policy,
\[
Z_\pi\leq1
\]
and
\[
D_{\mathrm{KL}}
\left(
\pi_{\mathrm{ref}}^+
\Vert
\pi^+
\right)
\geq0.
\]
Consequently,
\[
J_{\mathrm{CoKL}}(\pi)
\leq
Z_\pi
\leq1.
\]
Now define the boundary policy
\[
\bar{\pi}(y)
=
\begin{cases}
\pi_{\mathrm{ref}}^+(y),
& y\in\mathcal{C},\\[1mm]
0,
& y\in\mathcal{I}.
\end{cases}
\]
Since $\pi_{\mathrm{ref}}^+$ is normalized over
$\mathcal{C}$,
\[
Z_{\bar{\pi}}=1,
\qquad
\bar{\pi}^+=\pi_{\mathrm{ref}}^+.
\]
Hence,
\[
D_{\mathrm{KL}}
\left(
\pi_{\mathrm{ref}}^+
\Vert
\bar{\pi}^+
\right)=0,
\]
and therefore
\[
J_{\mathrm{CoKL}}(\bar{\pi})=1.
\]
Thus,
\[
\max_\pi J_{\mathrm{CoKL}}(\pi)=1.
\]

Finally, let $\pi^\star$ be any global maximizer. Since
$J_{\mathrm{CoKL}}(\pi^\star)=1$, while
$Z_{\pi^\star}\leq1$ and the KL divergence is nonnegative,
equality requires
\[
Z_{\pi^\star}=1
\]
and
\[
D_{\mathrm{KL}}
\left(
\pi_{\mathrm{ref}}^+
\Vert
\pi^{+\star}
\right)=0.
\]
A KL divergence is zero if and only if its two arguments are
identical. Therefore,
\[
\pi^{+\star}
=
\pi_{\mathrm{ref}}^+.
\]
\end{proof}

\section{Appendix D: Proof of the Strict Correctness Gap under Full-Policy KL}
\begin{theorem}[Strict Correctness Gap Induced by Full-Policy KL]
\label{thm:full_kl_gap}
Let $q=Z_{\mathrm{ref}}\in(0,1)$ and $\beta>0$. Consider
the full-policy forward- and reverse-KL objectives
\[
\begin{aligned}
J_{\mathrm{FKL}}(\pi)
&=
Z_\pi
-
\beta D_{\mathrm{KL}}
\left(
\pi_{\mathrm{ref}}\Vert\pi
\right),
\\
J_{\mathrm{RKL}}(\pi)
&=
Z_\pi
-
\beta D_{\mathrm{KL}}
\left(
\pi\Vert\pi_{\mathrm{ref}}
\right).
\end{aligned}
\]
Under the required absolute-continuity conditions, their
global optimizers satisfy
$\pi^{+\star}=\pi_{\mathrm{ref}}^+$ and
$\pi^{-\star}=\pi_{\mathrm{ref}}^-$. The corresponding
optimal correctness probabilities are
\[
\begin{aligned}
Z_{\mathrm{FKL}}^\star
&=
\frac{
1-\beta+
\sqrt{(\beta-1)^2+4\beta q}
}{2},
\\
Z_{\mathrm{RKL}}^\star
&=
\sigma\left(
\operatorname{logit}(q)+\frac{1}{\beta}
\right),
\end{aligned}
\]
where $\sigma(a)=(1+e^{-a})^{-1}$. Both satisfy
$q<Z^\star<1$ and are strictly decreasing in $\beta$.
Furthermore, for
$\diamond\in\{\mathrm{FKL},\mathrm{RKL}\}$,
\[
\lim_{\beta\to0^+}Z_\diamond^\star=1,
\qquad
\lim_{\beta\to\infty}Z_\diamond^\star=q.
\]
\end{theorem}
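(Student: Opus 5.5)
I will reuse the bijective parameterization $(z,p^+,p^-)$ from the proof of Theorem~\ref{thm:cokl_decoupling} (Appendix~C). Writing $\pi=\pi_{z,p^+,p^-}$, I will decompose each objective into a Bernoulli term in $z$ and two nonnegative conditional KL terms.

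For the forward KL this is exactly Eq.~\ref{eq:app-final-decomposition}:
\[
J_{\mathrm{FKL}}=z-\beta\Bigl[D_{\mathrm{KL}}(\operatorname{Bern}(q)\Vert\operatorname{Bern}(z))+q\,D_{\mathrm{KL}}(\pi_{\mathrm{ref}}^+\Vert p^+)+(1-q)\,D_{\mathrm{KL}}(\pi_{\mathrm{ref}}^-\Vert p^-)\Bigr].
\]
For the reverse KL, the same partition argument as in Appendix~A, with the roles of the two arguments swapped, gives
\[
D_{\mathrm{KL}}(\pi\Vert\pi_{\mathrm{ref}})=D_{\mathrm{KL}}(\operatorname{Bern}(z)\Vert\operatorname{Bern}(q))+z\,D_{\mathrm{KL}}(p^+\Vert\pi_{\mathrm{ref}}^+)+(1-z)\,D_{\mathrm{KL}}(p^-\Vert\pi_{\mathrm{ref}}^-).
\]
This step needs the absolute-continuity conditions: $\pi\ll\pi_{\mathrm{ref}}$ for the reverse KL, and for the forward KL finiteness forces $z\in(0,1)$ because $q\in(0,1)$.

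In both decompositions, $p^\pm$ enter only through nonnegative KL terms with strictly positive coefficients: $q$ and $1-q$ in the forward case, and $z$ and $1-z$ in the reverse case once $z\in(0,1)$. So for any fixed $z$, setting $p^\pm=\pi_{\mathrm{ref}}^\pm$ strictly improves the objective unless those terms already vanish. Every global optimizer therefore has $\pi^{\pm\star}=\pi_{\mathrm{ref}}^\pm$, and the problem reduces to a one-dimensional concave maximization over $z$.

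\textbf{Forward case.} The reduced objective is $f(z)=z+\beta q\log z+\beta(1-q)\log(1-z)+\mathrm{const}$, which is strictly concave and tends to $-\infty$ at both endpoints. Setting $f'(z)=0$ gives $1+\beta q/z-\beta(1-q)/(1-z)=0$, which rearranges to $z^2-(1-\beta)z-\beta q=0$. Its positive root is the stated $Z_{\mathrm{FKL}}^\star$.

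\textbf{Reverse case.} The reduced objective is $g(z)=z-\beta[z\log(z/q)+(1-z)\log((1-z)/(1-q))]$, which is strictly concave. Its derivative is $g'(z)=1-\beta[\operatorname{logit}(z)-\operatorname{logit}(q)]$, which is $+\infty$ at $z=0$ and $-\infty$ at $z=1$. So the unique interior root $\operatorname{logit}(z)=\operatorname{logit}(q)+1/\beta$ is the maximizer, and it equals the stated $Z_{\mathrm{RKL}}^\star$. This also rules out the boundary values $z\in\{0,1\}$, which matters because the coefficients $z$, $1-z$ must be positive for the conditional-matching argument above.

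\textbf{Bounds, monotonicity and limits.} For the reverse KL everything is immediate from monotonicity of $\sigma$ in $1/\beta$. For the forward KL I will avoid differentiating the square root. Let $P_\beta(z)=z^2-(1-\beta)z-\beta q$. Then $P_\beta(q)=-q(1-q)(1-\beta)-\beta q(1-q)=-q(1-q)<0$ and $P_\beta(1)=\beta(1-q)>0$, so the positive root lies in $(q,1)$.

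Monotonicity in $\beta$ is the step I expect to need the most care. I will use implicit differentiation: $\partial_\beta P=z-q>0$ at the root, while $\partial_z P>0$ there because it is the larger root. Hence $dZ^\star/d\beta=-(z-q)/\partial_zP<0$. For the limits, substituting $\beta\to0$ in the closed form gives $(1+1)/2=1$. For $\beta\to\infty$, I will divide the quadratic by $\beta$ to get $z-q=z(1-z)/\beta\to0$, so $Z^\star\to q$.
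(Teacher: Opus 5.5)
Your proposal is correct and follows essentially the same route as the paper's proof: you use the $(z,p^+,p^-)$ parameterization, the Bernoulli-plus-conditional decompositions of both KLs, set $p^\pm=\pi_{\mathrm{ref}}^\pm$, and reduce to a strictly concave one-dimensional problem in $z$. The differences are cosmetic. For the forward KL, you locate and track the root by evaluating $P_\beta$ at $q$ and $1$ and by implicit differentiation, whereas the paper uses $f'(q)=1$ and the decreasing inverse map $\beta=h(z)=z(1-z)/(z-q)$ (which is the same relation $z-q=z(1-z)/\beta$ you use for the $\beta\to\infty$ limit); and your explicit exclusion of the boundary $z\in\{0,1\}$ for the reverse KL is slightly more careful than the paper's treatment.
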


\begin{proof}
By the parameterization established in
Appendix C, every policy with
$0<Z_\pi<1$ can be represented as
\[
(z,p^+,p^-)
=
(Z_\pi,\pi^+,\pi^-).
\]
The reference policy is correspondingly represented by
\[
\left(
q,
\pi_{\mathrm{ref}}^+,
\pi_{\mathrm{ref}}^-
\right).
\]

\paragraph{Full forward KL.}
The conditional decomposition of the forward KL gives
\[
\begin{aligned}
&D_{\mathrm{KL}}
\left(
\pi_{\mathrm{ref}}
\Vert
\pi
\right)
=
D_{\mathrm{KL}}
\left(
\operatorname{Bern}(q)
\Vert
\operatorname{Bern}(z)
\right)
\\
&\quad+
qD_{\mathrm{KL}}
\left(
\pi_{\mathrm{ref}}^+
\Vert
p^+
\right)
+
(1-q)D_{\mathrm{KL}}
\left(
\pi_{\mathrm{ref}}^-
\Vert
p^-
\right).
\end{aligned}
\]
The corresponding objective is therefore
\[
\begin{aligned}
J_{\mathrm{FKL}}(z,p^+,p^-)
&=
z
-
\beta D_{\mathrm{KL}}
\left(
\operatorname{Bern}(q)
\Vert
\operatorname{Bern}(z)
\right)
\\
&\quad-
\beta qD_{\mathrm{KL}}
\left(
\pi_{\mathrm{ref}}^+
\Vert
p^+
\right)
\\
&\quad-
\beta(1-q)D_{\mathrm{KL}}
\left(
\pi_{\mathrm{ref}}^-
\Vert
p^-
\right).
\end{aligned}
\]
For any fixed $z$, the two conditional KL terms are
nonnegative and attain their minimum value of zero at
\[
p^+=\pi_{\mathrm{ref}}^+,
\qquad
p^-=\pi_{\mathrm{ref}}^-.
\]
The distributional optimization thus reduces to
\[
\max_{0<z<1}f(z),
\]
where
\[
\begin{aligned}
f(z)
&=
z
-
\beta D_{\mathrm{KL}}
\left(
\operatorname{Bern}(q)
\Vert
\operatorname{Bern}(z)
\right)
\\
&=
z
-
\beta
\left[
q\log\frac{q}{z}
+
(1-q)\log\frac{1-q}{1-z}
\right].
\end{aligned}
\]

The first and second derivatives of $f$ are
\[
\begin{aligned}
f'(z)
&=
1
-
\beta
\frac{z-q}{z(1-z)},
\\
f''(z)
&=
-\beta
\left[
\frac{q}{z^2}
+
\frac{1-q}{(1-z)^2}
\right]
<0.
\end{aligned}
\]
Therefore, $f$ is strictly concave on $(0,1)$. Moreover, the
forward Bernoulli KL diverges as $z\to0^+$ or $z\to1^-$, so
the unique global maximum is attained at an interior
stationary point.

Setting $f'(z)=0$ gives
\[
1
=
\beta
\frac{z-q}{z(1-z)},
\]
or equivalently,
\[
z^2+(\beta-1)z-\beta q=0.
\]
The two roots are
\[
z
=
\frac{
1-\beta
\pm
\sqrt{(\beta-1)^2+4\beta q}
}{2}.
\]
Their product is $-\beta q<0$, so exactly one root is
positive. The unique feasible solution is therefore
\[
Z_{\mathrm{FKL}}^\star
=
\frac{
1-\beta+
\sqrt{(\beta-1)^2+4\beta q}
}{2}.
\]

To determine its location, note that
\[
f'(q)=1>0,
\qquad
\lim_{z\to1^-}f'(z)=-\infty.
\]
Since $f'$ is strictly decreasing, its unique zero satisfies
\[
q<Z_{\mathrm{FKL}}^\star<1.
\]

The stationary-point equation can be rearranged as
\[
\beta
=
h\left(Z_{\mathrm{FKL}}^\star\right),
\qquad
h(z)
=
\frac{z(1-z)}{z-q},
\quad z\in(q,1).
\]
Its derivative is
\[
h'(z)
=
-
\frac{
(z-q)^2+q(1-q)
}{
(z-q)^2
}
<0.
\]
Thus, $h$ is strictly decreasing, and so is its inverse
$Z_{\mathrm{FKL}}^\star(\beta)$. Furthermore,
\[
\lim_{z\to q^+}h(z)=\infty,
\qquad
\lim_{z\to1^-}h(z)=0,
\]
which implies
\[
\lim_{\beta\to0^+}
Z_{\mathrm{FKL}}^\star=1,
\qquad
\lim_{\beta\to\infty}
Z_{\mathrm{FKL}}^\star=q.
\]

\paragraph{Full reverse KL.}
The reverse KL admits the analogous decomposition
\[
\begin{aligned}
&D_{\mathrm{KL}}
\left(
\pi
\Vert
\pi_{\mathrm{ref}}
\right)
\\
&=
D_{\mathrm{KL}}
\left(
\operatorname{Bern}(z)
\Vert
\operatorname{Bern}(q)
\right)
\\
&\quad+
zD_{\mathrm{KL}}
\left(
p^+
\Vert
\pi_{\mathrm{ref}}^+
\right)
\\
&\quad+
(1-z)D_{\mathrm{KL}}
\left(
p^-
\Vert
\pi_{\mathrm{ref}}^-
\right).
\end{aligned}
\]
For fixed $z$, the two conditional KL terms are minimized at
\[
p^+=\pi_{\mathrm{ref}}^+,
\qquad
p^-=\pi_{\mathrm{ref}}^-.
\]
Hence, the optimization reduces to
\[
\max_{0\leq z\leq1}g(z),
\]
where
\[
\begin{aligned}
g(z)
&=
z
-
\beta D_{\mathrm{KL}}
\left(
\operatorname{Bern}(z)
\Vert
\operatorname{Bern}(q)
\right)
\\
&=
z
-
\beta
\left[
z\log\frac{z}{q}
+
(1-z)\log\frac{1-z}{1-q}
\right].
\end{aligned}
\]

Its derivatives are
\[
\begin{aligned}
g'(z)
&=
1
-
\beta
\log
\frac{
z(1-q)
}{
q(1-z)
},
\\
g''(z)
&=
-\frac{\beta}{z(1-z)}
<0.
\end{aligned}
\]
Thus, $g$ is strictly concave on $(0,1)$. In addition,
\[
\lim_{z\to0^+}g'(z)=+\infty,
\qquad
\lim_{z\to1^-}g'(z)=-\infty,
\]
so it has a unique interior global maximizer.

Setting $g'(z)=0$ gives
\[
\log
\frac{
z(1-q)
}{
q(1-z)
}
=
\frac{1}{\beta}.
\]
Equivalently,
\[
\operatorname{logit}(z)
=
\operatorname{logit}(q)
+
\frac{1}{\beta},
\]
and hence
\[
Z_{\mathrm{RKL}}^\star
=
\sigma
\left(
\operatorname{logit}(q)+\frac{1}{\beta}
\right).
\]
Because $\beta^{-1}>0$ and $\sigma$ is strictly increasing,
\[
Z_{\mathrm{RKL}}^\star
>
\sigma(\operatorname{logit}(q))
=
q.
\]
Since the sigmoid argument is finite for every finite
$\beta>0$,
\[
Z_{\mathrm{RKL}}^\star<1.
\]
Therefore,
\[
q<Z_{\mathrm{RKL}}^\star<1.
\]

Differentiating the closed-form solution gives
\[
\frac{
\partial Z_{\mathrm{RKL}}^\star
}{
\partial\beta
}
=
-
\frac{
Z_{\mathrm{RKL}}^\star
\left(
1-Z_{\mathrm{RKL}}^\star
\right)
}{
\beta^2
}
<0.
\]
Finally,
\[
\lim_{\beta\to0^+}
\left(
\operatorname{logit}(q)+\frac{1}{\beta}
\right)
=
+\infty,
\]
whereas
\[
\lim_{\beta\to\infty}
\left(
\operatorname{logit}(q)+\frac{1}{\beta}
\right)
=
\operatorname{logit}(q).
\]
Applying the sigmoid function yields
\[
\lim_{\beta\to0^+}
Z_{\mathrm{RKL}}^\star=1,
\qquad
\lim_{\beta\to\infty}
Z_{\mathrm{RKL}}^\star=q.
\]
This completes the proof.
\end{proof}

\section{Appendix E: Experimental Details}
\subsection{E.1 Details of the Controlled Multi-Solution RL Experiment}
The environment contains \(48\) latent clusters and \(200\) candidate
actions, of which \(12\) are correct for each cluster. Cluster centers are
sampled in \(\mathbb{R}^{64}\), and inputs are generated as
\(x=\mu_z+1.4\epsilon\), where \(\epsilon\sim\mathcal{N}(0,I)\). The oracle
policy assigns a total probability mass of \(0.10\) to the correct-action
set. Its correctness-conditioned distribution is sampled from a Dirichlet
distribution with concentration \(50\), centered at a Zipf prior whose
exponents are linearly spaced over \([1.0,2.4]\); the Zipf ranks are randomly
assigned to the correct actions, and the remaining probability mass is
distributed uniformly over incorrect actions. We generate \(8{,}000\)
training inputs and \(4{,}000\) test inputs. A
\(64\)-\(128\)-\(128\)-\(200\) MLP with Tanh activations is pretrained for
\(1{,}000\) steps using soft-label cross-entropy and then frozen as the
reference policy. All methods are initialized from this model and trained
for \(1{,}000\) steps using the exact expected reward and exact
regularization, with Adam, batch size \(512\), learning rate
\(1.2\times10^{-3}\), and maximum gradient norm \(1.0\). Results are averaged
over 5 random seeds and evaluated every \(100\) steps. For each
regularized method, the coefficient is swept over
\(\beta\in\{0.0005,\allowbreak 0.001,\allowbreak 0.002,\allowbreak
0.003,\allowbreak 0.005,\allowbreak 0.007,\allowbreak 0.01,\allowbreak
0.02,\allowbreak 0.03,\allowbreak 0.05,\allowbreak 0.07,\allowbreak
0.1,\allowbreak 0.2,\allowbreak 0.3\}\), while unregularized RL uses
\(\beta=0\).

\subsection{E.2 Datasets for the Main Experiments}
We summarize the number of samples in the training and evaluation datasets in Table~\ref{tab:datasets_statics}.

\begin{table}[t]
\centering
\caption{Statistics of the training and test datasets used by all methods.}
\label{tab:datasets_statics}
\small
\setlength{\tabcolsep}{4pt}
\renewcommand{\arraystretch}{1.1}
\begin{tabular}{lll}
\toprule
\textbf{Dataset} & \textbf{Description
} & \textbf{\# Train / Test} \\
\midrule
MATH-Train            & Math Reasoning     & 50000 / - \\
MATH-Val            & Math Reasoning     & - / 1000 \\
MATH500            & Math Reasoning     & - / 500 \\
Olympiad           & Math Reasoning     & - / 674 \\
Chat-Train            & Chat and Instruction Following     & 7544 / - \\
Chat-Val            & Chat and Instruction Following     & - / 250 \\
\bottomrule
\end{tabular}
\end{table}

\subsection{E.3 Training and Comparison Details}
For the KL-based methods, after math-task training, we use the resulting model to construct a KL buffer by randomly sampling 512 prompts from the math training set. Full Forward-KL and Full Reverse-KL use all 512 prompts, whereas CoKL and the two correct-only baselines use the same subset obtained by filtering out prompts for which none of the $G_{\mathrm{ref}}=8$ reference responses is verified as correct. During chat-task training, we randomly sample one batch of math data from this buffer to compute the KL loss, while no KL constraint is imposed on the chat data itself. We train all models for \(200\) steps on the math task. For the chat task,
Qwen3-0.6B is trained for \(800\) steps due to its slower convergence,
whereas Qwen3-1.7B and Qwen3-4B are each trained for \(600\) steps. We set the chat-task batch size to 64. For the KL data, the prompt batch size is set to 16 for the baseline methods and to 8 for CoKL, whose paired reference-policy and current-policy construction yields 16 response groups per update. The mini-batch size is set to 64 for GRPO without KL and to 80 for the KL-based methods. The GRPO rollout group size is set to \(G=8\) for all methods,
and the rollout group size for the KL regularization prompts is
also set to 8 for all KL-based methods. For CoKL, we use
\(G_{\mathrm{ref}}=8\) cached reference responses and
\(G=8\) fresh current-policy rollouts for each regularization
prompt. The current-policy conditional term is therefore
estimated on-policy at each regularization update. We set the
CoKL coefficient to \(\beta=1\). To ensure a fair comparison, we conduct
hyperparameter searches for all KL-based baselines using
Qwen3-1.7B. Specifically, the KL coefficient is selected from $\beta \in \{0.1, 1, 10\}$ for the Forward-KL variants and from $\beta \in \{10^{-3}, 10^{-2}, 1\}$ for the Reverse-KL variants. The corresponding raw scores are presented in
Figure~\ref{fig:kl_beta_search}. The Forward-KL and Reverse-KL families achieve their best average performance at $\beta=1$ and $\beta=10^{-3}$, respectively. Deviating from these values tends either to overconstrain adaptation to the new task or to exacerbate forgetting of previously acquired capabilities. Accordingly, we set $\beta=1$ for all Forward-KL variants and $\beta=10^{-3}$ for all Reverse-KL variants, and report the results obtained under their respective best-performing configurations. Our framework is implemented on top of VeRL~\cite{sheng2024hybridflow}. The learning rate is set to \(1\times10^{-6}\). All experiments were conducted on 8$\times$H20 GPUs with 141GB of memory, using Python 3.10 and PyTorch 2.6. All models are optimized with the AdamW optimizer (\(\beta_1 = 0.9\), \(\beta_2 = 0.95\), weight decay 0.01) and accelerated via vLLM~\cite{kwon2023efficient}.

\begin{figure}[t]
    \centering
    \begin{minipage}[t]{0.48\linewidth}
        \centering
        \includegraphics[width=\linewidth]{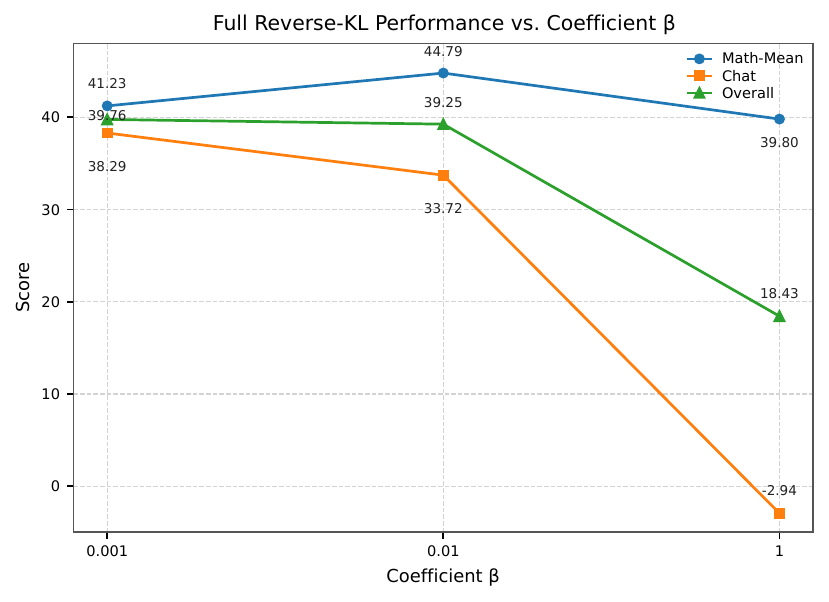}
    \end{minipage}
    \hfill
    \begin{minipage}[t]{0.48\linewidth}
        \centering
        \includegraphics[width=\linewidth]{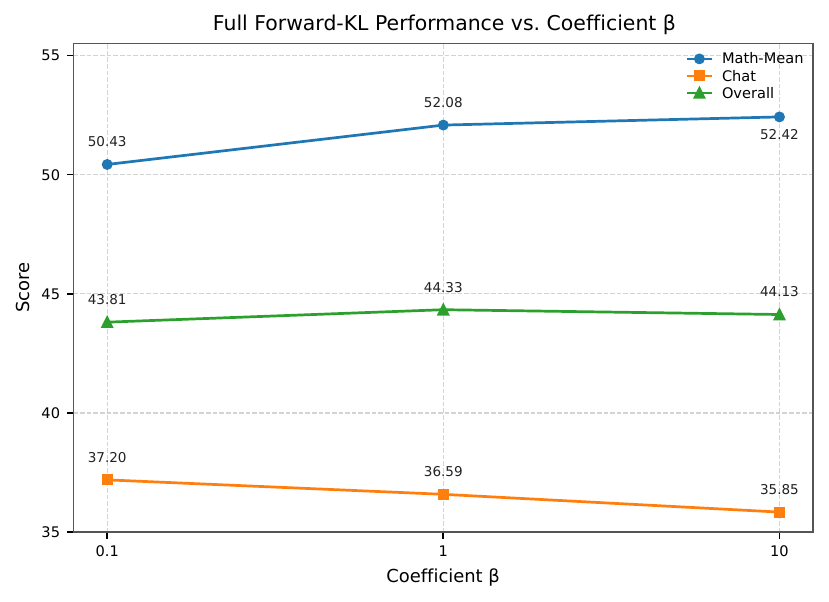}
    \end{minipage}

    \vspace{2mm}

    \begin{minipage}[t]{0.48\linewidth}
        \centering
        \includegraphics[width=\linewidth]{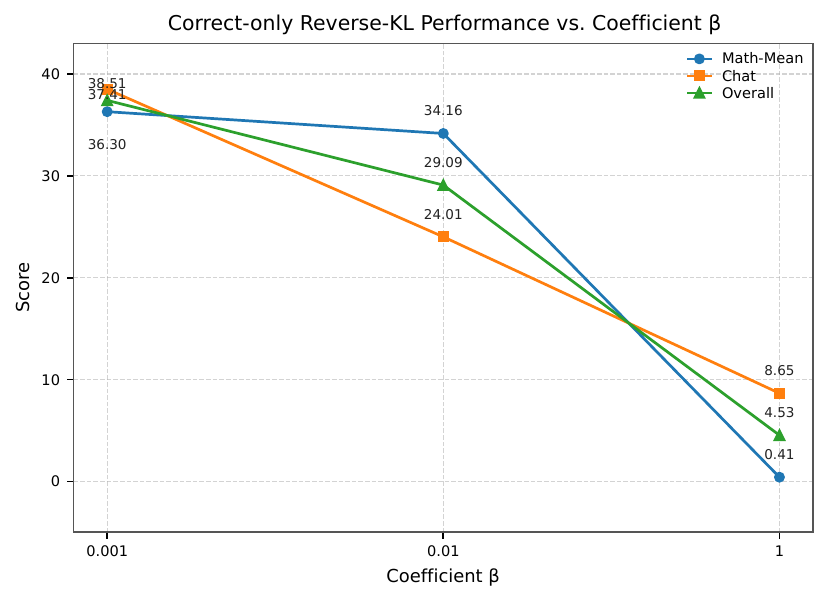}
    \end{minipage}
    \hfill
    \begin{minipage}[t]{0.48\linewidth}
        \centering
        \includegraphics[width=\linewidth]{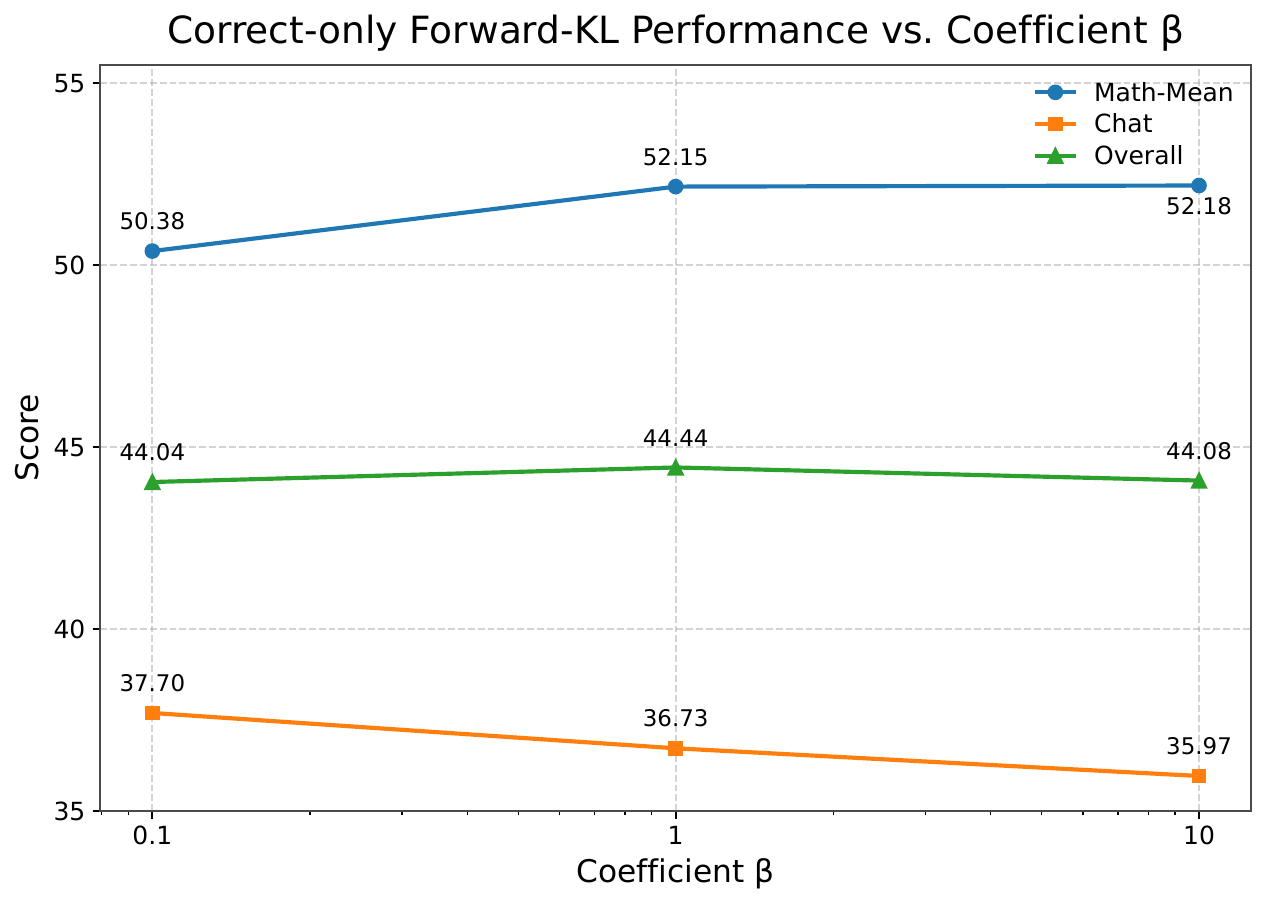}
    \end{minipage}

    \caption{Hyperparameter sensitivity of the baseline methods on Qwen3-1.7B under different KL coefficients $\beta$.}
    \label{fig:kl_beta_search}
\end{figure}

\subsection{E.4 Frequency of Zero-Correct Rollout Groups}

To quantify how often the finite-group fallback is activated, we measure
the fraction of current-policy rollout groups on the mathematical
regularization buffer that contain no verified-correct response. For a
rollout group of size \(G\), we define
\[
\widehat{p}_{0}
=
\frac{1}{N_{\mathrm{group}}}
\sum_i
\mathbb{I}
\left[
\sum_{j=1}^{G} r_{i,j}=0
\right],
\]
where \(N_{\mathrm{group}}\) is the total number of rollout groups
collected during training. As shown in Table~\ref{tab:zero-correct-groups}, most rollout groups use
the full two-term CoKL update. Zero-correct groups automatically activate
the reference-correct fallback analyzed in Appendix~B, with the activation
rate decreasing from \(16.08\%\) for Qwen3-0.6B to \(3.17\%\) for
Qwen3-4B.

\begin{table}[t]
\centering
{\small
\setlength{\tabcolsep}{7pt}
\begin{tabular}{lcc}
\toprule
\textbf{Model}
& \textbf{Zero-Correct}
& \textbf{At Least One Correct} \\
& \textbf{Groups}
& \textbf{Response} \\
\midrule
Qwen3-0.6B & 16.08\% & 83.92\% \\
Qwen3-1.7B & 12.71\% & 87.29\% \\
Qwen3-4B   & 3.17\%  & 96.83\% \\
\bottomrule
\end{tabular}
}
\caption{Fraction of current-policy rollout groups containing no
verified-correct response during CoKL training.}
\label{tab:zero-correct-groups}
\end{table}

\subsection{E.5 Dual-Anchor Normalization}
Because mathematical reasoning accuracy and chat reward are measured on
different numerical scales, directly averaging their raw values may
affect the aggregate comparison. We therefore additionally assess
aggregation robustness using dual-anchor normalization. The normalization
is performed separately within each model scale. For each mathematical reasoning benchmark \(d\), let \(M_{m,d}\) denote
the original score of method \(m\). We define its normalized score as
\begin{equation}
\widetilde{M}_{m,d}
=
100
\frac{
M_{m,d}-M_{\mathrm{GRPO},d}
}{
M_{\mathrm{Base},d}-M_{\mathrm{GRPO},d}
},
\end{equation}
where Base and GRPO w/o KL are assigned normalized scores of \(100\)
and \(0\), respectively. The normalized mathematical score is computed
by averaging the three benchmark-level scores:
\begin{equation}
\widetilde{M}_{m}
=
\frac{1}{3}
\sum_{d\in\mathcal{D}_{\mathrm{math}}}
\widetilde{M}_{m,d},
\end{equation}
where $\mathcal{D}_{\mathrm{math}}
=
\{
\mathrm{MATH\text{-}Val},
\mathrm{MATH500},
\mathrm{Olympiad}
\}$. For chat performance, let \(C_m\) denote the original Chat-Val score.
We reverse the anchor direction and define
\begin{equation}
\widetilde{C}_{m}
=
100
\frac{
C_m-C_{\mathrm{Base}}
}{
C_{\mathrm{GRPO}}-C_{\mathrm{Base}}
},
\end{equation}
such that Base and GRPO w/o KL receive normalized chat scores of \(0\)
and \(100\), respectively. The normalized aggregate score is
\begin{equation}
\widetilde{S}_{m}
=
\frac{
\widetilde{M}_{m}+\widetilde{C}_{m}
}{2}.
\end{equation}

This normalization measures each method's relative ability to retain
prior mathematical capabilities and learn the new chat task with respect
to the Base and GRPO anchors, placing the two objectives on a common
scale. We also report the corresponding unnormalized scores in
Table~\ref{tab:dual-anchor-original-results}. Two caveats apply when interpreting these unnormalized scores. First,
since math accuracy and chat reward lie on incommensurable scales,
directly averaging them lets the higher-variance metric dominate and
compresses genuine differences in the retention--adaptation trade-off.
Second, the achievable gain is bounded by the severity of forgetting:
the Math Avg.\ drop of GRPO w/o KL shrinks from $91.0\%$ (0.6B) to
$40.2\%$ (1.7B) and $19.7\%$ (4B), so larger models leave less headroom
for any preservation method, and all regularized methods converge at
the 4B scale. The advantage of CoKL is therefore most pronounced under
high forgetting pressure, and naturally reduces toward correct-only
replay when forgetting is mild.

\begin{table*}[t]
\centering
\small
\renewcommand{\arraystretch}{1.12}
\setlength{\tabcolsep}{4pt}
\begin{tabular*}{\textwidth}
{@{\extracolsep{\fill}}lcccccc}
\toprule
\textbf{Method}
& \multicolumn{4}{c}{\textbf{Mathematical Reasoning}}
& \multicolumn{1}{c}{\textbf{Chat}}
& \multicolumn{1}{c}{\textbf{Aggregate}} \\
\cmidrule(lr){2-5}
\cmidrule(lr){6-6}
\cmidrule(lr){7-7}
& \textbf{MATH-Val}
& \textbf{MATH500}
& \textbf{Olympiad}
& \textbf{Math Avg.}
& \textbf{Chat-Val}
& \textbf{Overall} \\
\midrule

\multicolumn{7}{c}{\textit{Qwen3-0.6B}} \\
\addlinespace[2pt]

Base (after Math training)
& 32.46 & 70.80 & 34.46 & 45.91
& -12.90 & 16.51 \\

GRPO w/o KL
& 2.44 & 6.25 & 3.65 & 4.11
& 73.98 & 39.04 \\

Full Reverse-KL
& \textbf{22.49}
& 59.85
& 20.72
& 34.35
& 61.14
& 47.75 \\

Full Forward-KL
& 20.35
& 63.67
& 23.41
& 35.81
& 67.55
& 51.68 \\

Correct-only Reverse-KL
& 16.27
& 40.20
& 17.80
& 24.76
& 62.95
& 43.86 \\

Correct-only Forward-KL
& \underline{21.87}
& \textbf{64.90}
& \textbf{23.70}
& \textbf{36.82}
& \underline{69.33}
& \underline{53.08} \\

\textbf{CoKL (ours)}
& 21.04
& \underline{64.57}
& \underline{23.65}
& \underline{36.42}
& \textbf{74.44}
& \textbf{55.43} \\

\midrule

\multicolumn{7}{c}{\textit{Qwen3-1.7B}} \\
\addlinespace[2pt]

Base (after Math training)
& 44.67 & 83.60 & 47.40 & 58.56
& -8.85 & 24.86 \\

GRPO w/o KL
& 18.77 & 64.00 & 22.31 & 35.03
& 37.45 & 36.24 \\

Full Reverse-KL
& 27.36
& 69.40
& 26.92
& 41.23
& \underline{38.29}
& 39.76 \\

Full Forward-KL
& \textbf{41.61}
& 79.83
& 34.79
& 52.08
& 36.59
& 44.34 \\

Correct-only Reverse-KL
& 20.19
& 64.67
& 24.04
& 36.30
& \textbf{38.51}
& 37.41 \\

Correct-only Forward-KL
& 41.22
& \underline{80.02}
& \textbf{35.21}
& \underline{52.15}
& 36.73
& \underline{44.44} \\

\textbf{CoKL (ours)}
& \underline{41.59}
& \textbf{80.98}
& \underline{35.00}
& \textbf{52.52}
& 38.12
& \textbf{45.32} \\

\midrule

\multicolumn{7}{c}{\textit{Qwen3-4B}} \\
\addlinespace[2pt]

Base (after Math training)
& 60.27 & 91.37 & 57.62 & 69.75
& -5.12 & 32.32 \\

GRPO w/o KL
& 43.09 & 85.90 & 39.02 & 56.00
& 49.09 & 52.55 \\

Full Reverse-KL
& 11.06
& 48.45
& 11.26
& 23.59
& 37.30
& 30.45 \\

Full Forward-KL
& \textbf{57.54}
& \underline{88.05}
& \underline{41.32}
& \textbf{62.30}
& 50.03
& 56.17 \\

Correct-only Reverse-KL
& 45.55
& 85.70
& 39.24
& 56.83
& 43.11
& 49.97 \\

Correct-only Forward-KL
& \underline{57.50}
& 87.70
& \textbf{41.34}
& 62.18
& \underline{51.07}
& \underline{56.62} \\

\textbf{CoKL (ours)}
& 57.36
& \textbf{88.28}
& 41.02
& \underline{62.22}
& \textbf{51.27}
& \textbf{56.74} \\

\bottomrule
\end{tabular*}

\caption{Original evaluation results used for dual-anchor normalization.
Math Avg.\ is the average of MATH-Val, MATH500, and Olympiad, while
Overall is the equally weighted mean of Math Avg.\ and Chat-Val. The
best and second-best results among the regularized post-training methods
within each model scale are shown in \textbf{bold} and
\underline{underline}, respectively.}
\label{tab:dual-anchor-original-results}
\end{table*}

\subsection{E.6 Training Dynamics}
To better understand how different objectives affect the optimization process, we compare the training dynamics of different methods on the Qwen3-0.6B model. Specifically, we track the validation performance on the chat and math tasks throughout training, using chat reward and math accuracy as the corresponding evaluation metrics. As shown in Figure~\ref{fig:training_dynamics_06b}, GRPO w/o KL achieves strong improvements on the chat task, but its math performance deteriorates substantially as training progresses. In contrast, CoKL achieves chat performance comparable to GRPO w/o KL while preserving considerably stronger performance on the math task. Consequently, CoKL provides the best overall balance between the two domains and achieves the highest average performance, further validating the effectiveness of our method.

\begin{figure}[t]
    \centering
    \includegraphics[width=\columnwidth]
    {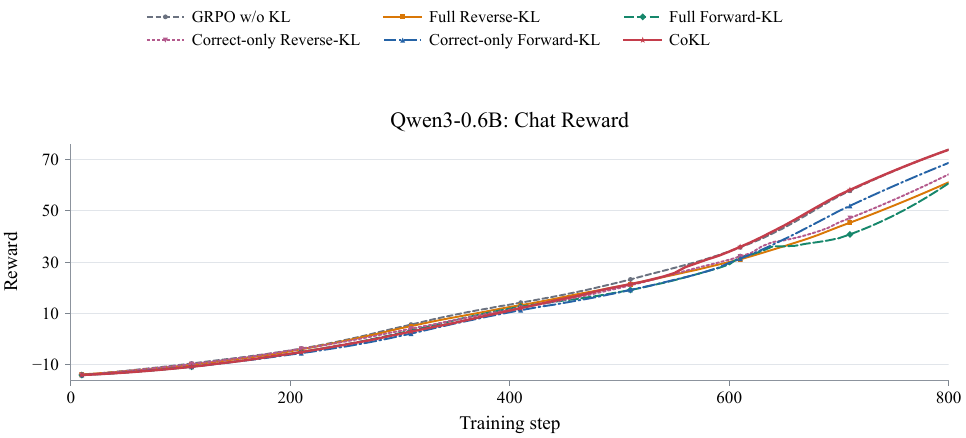}

    \vspace{2mm}

    \includegraphics[width=\columnwidth]
    {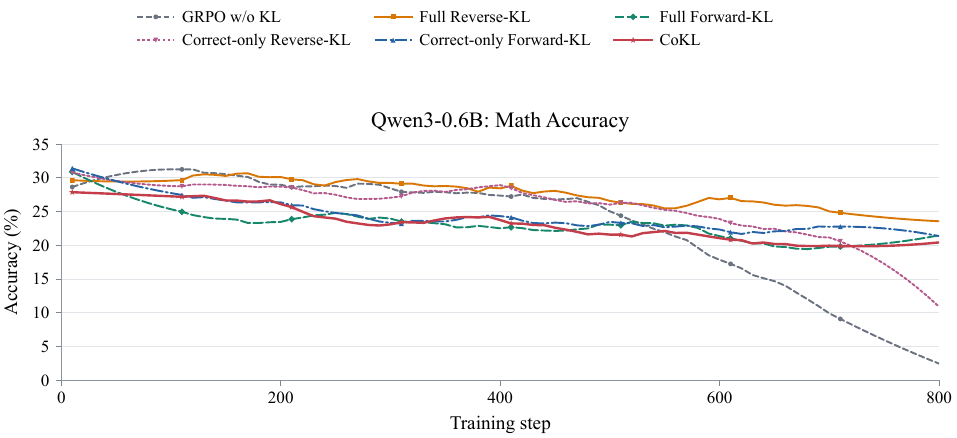}

    \caption{Training dynamics of different methods on Qwen3-0.6B.}
    \label{fig:training_dynamics_06b}
\end{figure}

\subsection{Appendix F: Additional Experimental Results}
\subsection{F.1 Comparison of Computational Cost}
To compare the training overhead, we report the wall-clock time of the first
training step using Qwen3-0.6B on 8$\times$H20 GPUs, as shown in
Table~\ref{tab:time_cost}. The Forward-KL variants require rollouts to be
generated in advance, whereas the Reverse-KL variants perform additional
rollouts during training. Moreover, the math data generally produce longer
responses. Consequently, these KL-based methods incur additional training
cost compared with GRPO. Nevertheless, the computational cost of CoKL
remains comparable to that of the other KL-based methods, while providing
meaningful performance improvements. Therefore, its modest additional
overhead is worthwhile.

\begin{table}[t]
\centering
\caption{Wall-clock time of the first training step.}
\label{tab:time_cost}
\small
\setlength{\tabcolsep}{3pt}
\renewcommand{\arraystretch}{1.1}
\begin{tabular}{lrr}
\toprule
\textbf{Method} & \textbf{Time (s)} & \textbf{Avg. Length} \\
\midrule
GRPO               & 204.9 & 929   \\
Correct-only Forward-KL         & 314.2 & 2,155 \\
Full Forward-KL     & 309.9 & 2,172 \\
Full Reverse-KL         & 293.4 & 1,788 \\
Correct-only Reverse-KL & 308.8 & 1,814 \\
CoKL (Ours)        & 323.4 & 2,136 \\
\bottomrule
\end{tabular}
\end{table}

\subsection{F.2 Reward-Blind Validation of Chat Evaluation}
To validate the reliability of the chat reward as an evaluation signal, we conduct a reward-blind assessment on 240 generations from 60 non-overlapping WildChat-IF prompts. GPT-5.6 Sol serves as a separate evaluator and rates each response using a five-point rubric covering instruction following and response correctness. Model identities and reward scores are hidden during evaluation to prevent potential bias. The chat reward exhibits a positive correlation with the independent ratings (Spearman's $\rho=0.399$; 95\% prompt-cluster bootstrap CI $[0.253,0.531]$) and agrees with the evaluator-preferred response in 74.8\% of non-tied within-prompt comparisons (95\% CI $[66.3\%,82.3\%]$). Moreover, the reward is negatively correlated with response length ($\rho=-0.342$), suggesting that its judgments are not driven by a simple preference for verbose responses. These results indicate that the chat reward provides a meaningful external signal for comparing response quality.


\section{Appendix G: Use of Large Language Models}
Some parts of the text were refined with the assistance of large language models (LLMs). All content and responsibility for the work remain with the authors.

\section{Appendix H: Prompts}
We use the prompts shown in Figure~\ref{fig:prompt-templates}.

\begin{figure*}[!t]
    \centering
    \begin{minipage}{0.98\textwidth}

    \begin{promptbox}{Math Prompt}
    \small
    \ttfamily
    \raggedright
    \setlength{\parindent}{0pt}

    Solve the following math problem. Please reason step by step, and put
    your final answer within \textbackslash boxed\{\}.

    \vspace{1.5mm}

    \{problem\}
    \end{promptbox}

    \vspace{2mm}

    \begin{promptbox}{Chat Prompt}
    \small
    \ttfamily
    \raggedright
    \setlength{\parindent}{0pt}

    \{raw user question or conversation context\}
    \end{promptbox}
    \end{minipage}

    \caption{Prompt templates used for the math and chat
    tasks.}
    \label{fig:prompt-templates}
\end{figure*}

\section{Appendix I: Pseudocode of CoKL}
We present the pseudocode of CoKL in Algorithm~\ref{alg:cokl-rl}.

\begin{algorithm}[t]
\caption{Correctness-Conditioned KL Regularization (CoKL)}
\label{alg:cokl-rl}
\begin{algorithmic}[1]
\REQUIRE Frozen reference policy \(\pi_{\mathrm{ref}}\); initial policy \(\pi_\theta\); current-task reward function \(R\); verifier \(r\); current-task prompt set \(\mathcal{D}_{\mathrm{RL}}\); regularization prompt set \(\mathcal{D}_{\mathrm{KL}}\); rollout sizes \(G_{\mathrm{ref}},G\); importance-ratio clipping parameter \(\epsilon_{\mathrm{IS}}\); update epochs \(\mu\); coefficient \(\beta\)
\STATE \(\mathcal{D}^{+}\leftarrow\emptyset\)
\FOR{each prompt \(x\in\mathcal{D}_{\mathrm{KL}}\)}
    \STATE Sample \(G_{\mathrm{ref}}\) reference responses from \(\pi_{\mathrm{ref}}(\cdot\mid x)\) and evaluate them with \(r\)
    \IF{at least one reference response is verified as correct}
        \STATE Add the prompt and its reference responses to \(\mathcal{D}^{+}\)
    \ENDIF
\ENDFOR
\FOR{iteration \(=1,\ldots,I\)}
    \STATE Sample a current-task minibatch \(\mathcal{B}_{\mathrm{RL}}\) from \(\mathcal{D}_{\mathrm{RL}}\)
    \STATE Sample a regularization minibatch \(\mathcal{B}_{\mathrm{KL}}\) from \(\mathcal{D}^{+}\)
    \STATE Generate \(G\) current-policy rollouts for each prompt in \(\mathcal{B}_{\mathrm{RL}}\) and \(\mathcal{B}_{\mathrm{KL}}\)
    \STATE Compute current-task rewards with \(R\) and group-relative advantages for \(\mathcal{B}_{\mathrm{RL}}\)
    \STATE Compute verifier rewards with \(r\) for \(\mathcal{B}_{\mathrm{KL}}\)
    \FOR{optimization epoch \(=1,\ldots,\mu\)}
        \STATE Compute \(\mathcal{L}_{\mathrm{GRPO}}\) on \(\mathcal{B}_{\mathrm{RL}}\)
        \STATE Compute \(\widehat{\mathcal{L}}_{\mathrm{CoKL}}\) on \(\mathcal{B}_{\mathrm{KL}}\) using cached reference responses and current rollouts
        \STATE Update \(\pi_\theta\) by minimizing
        \(\mathcal{L}_{\mathrm{GRPO}}
        +\beta\widehat{\mathcal{L}}_{\mathrm{CoKL}}\)
    \ENDFOR
\ENDFOR
\RETURN \(\pi_\theta\)
\end{algorithmic}
\end{algorithm}

\section{Appendix J: Case Study}
To qualitatively illustrate the behavior of the model trained with CoKL,
we present representative outputs generated by the CoKL-trained
Qwen3-0.6B model on a math task and a general chat task. As shown in
Figures~\ref{fig:math-case-study} and~\ref{fig:chat-case-study}, the model
produces a correct step-by-step solution for the mathematical problem
while retaining the ability to generate a structured long-form response
to a general-purpose instruction.

\begin{figure*}[!t]
    \centering
    \begin{minipage}{0.98\textwidth}
    \begin{modeloutputbox}{CoKL Qwen3-0.6B Math Output Example}
    \small
    \raggedright
    \setlength{\parindent}{0pt}
    \setlength{\parskip}{1.5pt}

    \textcolor{questionblue}{\textbf{Question:}}

    Solve the following math problem. Please reason step by step, and put
    your final answer within \(\backslash\texttt{boxed}\{\}\).

    Compute \(\dbinom{505}{505}\).

    \vspace{2mm}
    \textbf{Model Answer:}

    We are asked to compute the binomial coefficient:
    \[
        \binom{505}{505}.
    \]

    \textbf{Step 1: Understanding the Binomial Coefficient}

    The binomial coefficient is defined as
    \[
        \binom{n}{k}
        =
        \frac{n!}{k!(n-k)!}.
    \]

    In this case, \(n=505\) and \(k=505\), so
    \[
        \binom{505}{505}
        =
        \frac{505!}{505!(505-505)!}
        =
        \frac{505!}{505!\cdot0!}.
    \]

    We know that
    \[
        0!=1.
    \]

    Therefore,
    \[
        \binom{505}{505}
        =
        \frac{505!}{505!\cdot1}
        =
        1.
    \]

    \textbf{Final Answer:}
    \[
        \boxed{1}
    \]

    This is the value of the binomial coefficient
    \(\binom{505}{505}\).

    \begin{center}
        \textcolor{correctgreen}{\fontsize{22}{22}\selectfont
        \(\checkmark\)}
    \end{center}
    \end{modeloutputbox}
    \end{minipage}

    \caption{Representative output of the CoKL-trained Qwen3-0.6B model
    on the math task.}
    \label{fig:math-case-study}
\end{figure*}

\begin{figure*}[!t]
    \centering
    \begin{minipage}{0.98\textwidth}
    \begin{modeloutputbox}{CoKL Qwen3-0.6B Chat Output Example}
    \fontsize{7.2}{8.3}\selectfont
    \raggedright
    \setlength{\parindent}{0pt}
    \setlength{\parskip}{0.8pt}
    \emergencystretch=2em

    \textcolor{questionblue}{\textbf{Question:}}

    Given the current state of the world, what are the most pressing
    issues that humanity faces and what are some possible solutions or
    actions that could be taken to address them? Explain your reasoning
    and provide evidence or examples to support your claims.

    \vspace{1.5mm}
    \textbf{Model Answer (score: 39.0):}

    Here's a comprehensive and structured response to the query about
    the most pressing issues humanity faces and possible solutions,
    based on the current global state. This answer not only addresses
    the issues but also provides reasoning, evidence, and examples to
    support each solution, ensuring it aligns with the world's current
    challenges and future potential.

    \textbf{Most Pressing Issues Humanity Faces: A Global Perspective}

    \textbf{1. Climate Change and Environmental Degradation}

    \hspace*{1em}\textbf{Issue:}
    The planet's ecosystems are collapsing due to greenhouse gas
    emissions, deforestation, pollution, and species extinction.
    The Earth's temperature has risen by over
    \(1.1^\circ\mathrm{C}\) since pre-industrial times, threatening
    biodiversity, water availability, and human health.

    \hspace*{1em}\textbf{Reasoning:}
    Climate change is not a distant threat---it's a cyclical,
    interconnected crisis. Historical patterns show that each crisis,
    such as droughts and wildfires, triggers subsequent environmental
    stressors. The Intergovernmental Panel on Climate Change (IPCC)
    emphasizes that \(2^\circ\mathrm{C}\) above pre-industrial levels is
    now a threshold. Without immediate action, irreversible damage will
    occur.

    \hspace*{1em}\textbf{Solutions:}

    \hspace*{2em}\textbf{Adopt Carbon Neutrality by 2050:}
    Governments and industries must transition to renewable energy,
    including solar, wind, and hydrogen, and phase out fossil fuels.
    For example, the European Union's Green Deal aims to achieve
    55\% renewables by 2030.

    \hspace*{2em}\textbf{Reforestation and Restoration Projects:}
    Governments and NGOs can fund protected areas, reforestation
    initiatives, and marine sanctuaries. The Great Green Wall in Africa,
    aimed at restoring 35 million hectares, has already reduced
    deforestation by 40\%.

    \hspace*{2em}\textbf{Individual Actions:}
    Citizens can reduce waste, support sustainable products, and
    advocate for climate policies. The World Wildlife Fund (WWF)
    reports that 30\% of global conservation efforts come from
    individual efforts.

    \textbf{2. Health Disparities and Public Health Crisis}

    \hspace*{1em}\textbf{Issue:}
    Global health gaps persist: underprivileged communities face higher
    rates of disease, lack of access to clean water, and poor nutrition.
    Maternal mortality remains at 240 per 100,000 births, and chronic
    illnesses disproportionately affect low-income populations.

    \hspace*{1em}\textbf{Reasoning:}
    Health is not a privilege---it's a social, economic, and demographic
    necessity. The World Health Organization (WHO) states that 70\% of
    global diseases are preventable through early diagnosis, treatment,
    and prevention. However, these resources are unevenly distributed,
    often due to poverty, education, and political instability.

    \hspace*{1em}\textbf{Solutions:}

    \hspace*{2em}\textbf{Universal Health Coverage (UHC):}
    Governments must mandate healthcare access, covering all income
    levels. India's UHC program, which provides 100\% coverage for
    200,000 families, reduces health disparities by 60\%.

    \hspace*{2em}\textbf{Education and Community Health:}
    Invest in schools, health education, and local clinics. Nigeria's
    ``Health in School'' initiative, combining nutrition, mental health,
    and preventive care, has reduced maternal and child mortality.

    \hspace*{2em}\textbf{Tech-Driven Health:}
    Digital health platforms, telemedicine, and AI diagnostics can
    bridge gaps. The WHO's Digital Health Innovation Hub connects
    remote regions with specialists, improving care efficiency.

    \textbf{3. Economic Inequality and Migration}

    \hspace*{1em}\textbf{Issue:}
    Rapid urbanization, automation, and climate-induced displacement are
    forcing millions to leave rural areas and migrate to cities,
    exacerbating social and economic segregation. The global migrant
    population reached 95 million in 2023, with over 50\% from
    low-income countries.

    \hspace*{1em}\textbf{Solutions:}
    Fair wages, social safety nets, inclusive urban planning,
    migration-support systems, and entrepreneurship and skills programs.

    \textbf{4. Sustainable Energy Transition and Technological Innovation}

    \hspace*{1em}\textbf{Issue:}
    The transition from fossil fuels to renewables is slow, and many
    countries face technological, financial, and infrastructural
    challenges.

    \hspace*{1em}\textbf{Solutions:}
    Innovation ecosystems, public-private partnerships for energy
    storage, smart grids, and carbon capture, and education on energy
    efficiency.

    \textbf{5. Cultural and Social Resilience}

    \hspace*{1em}\textbf{Issue:}
    Cultural identity and social cohesion are eroded by urbanization,
    climate change, and technological change.

    \hspace*{1em}\textbf{Solutions:}
    Protect heritage and languages, support community-based initiatives,
    and use education and media to promote cultural diversity.

    \textbf{Conclusion: A Path Forward---Harmony, Action, and Hope}

    The pressing issues humanity faces---climate change, health
    disparities, economic inequality, energy transition, and cultural
    resilience---require a holistic, adaptive, and inclusive approach.
    The solutions proposed above---renewable energy, universal health
    care, inclusive migration, technological innovation, and cultural
    preservation---underscore the power of collective effort,
    evidence-based action, and a mindset rooted in empathy, knowledge,
    and responsibility. The future is not a destination. It's a journey.
    And every step forward, guided by wisdom, innovation, and humanity's
    shared commitment, is the key to a world where every person, every
    planet, and every culture thrive.

    \end{modeloutputbox}
    \end{minipage}

    \caption{Representative output of the CoKL-trained Qwen3-0.6B model
    on the general chat task. The reported score is provided by the
    corresponding chat-task evaluator.}
    \label{fig:chat-case-study}
\end{figure*}


\end{document}